\icmltitlerunning{Regularized Optimal Transport is Ground Cost Adversarial}
\begin{document}

\twocolumn[
\icmltitle{Regularized Optimal Transport is Ground Cost Adversarial}



\icmlsetsymbol{equal}{*}

\begin{icmlauthorlist}
\icmlauthor{Fran\c{c}ois-Pierre Paty}{ensae}
\icmlauthor{Marco Cuturi}{google,ensae}
\end{icmlauthorlist}

\icmlaffiliation{ensae}{CREST / ENSAE Paris, Institut Polytechnique de Paris}
\icmlaffiliation{google}{Google Brain}

\icmlcorrespondingauthor{Fran\c{c}ois-Pierre Paty}{francois.pierre.paty@ensae.fr}

\icmlkeywords{Optimal Transport}

\vskip 0.3in
]



\printAffiliationsAndNotice{}  

\begin{abstract}
    Regularizing the optimal transport (OT) problem has proven crucial for OT theory to impact the field of machine learning. For instance, it is known that regularizing OT problems with entropy leads to faster computations and better differentiation using the Sinkhorn algorithm, as well as better sample complexity bounds than classic OT.
    In this work we depart from this practical perspective and propose a new interpretation of regularization as a robust mechanism, and show using Fenchel duality that any convex regularization of OT can be interpreted as ground cost adversarial. This incidentally gives access to a robust dissimilarity measure on the ground space, which can in turn be used in other applications. We propose algorithms to compute this robust cost, and illustrate the interest of this approach empirically.
\end{abstract}

\section{Introduction}

Optimal transport (OT) has become a generic tool in machine learning, with applications in various domains such as supervised machine learning~\cite{FrognerNIPS,abadeh2015distributionally,courty2016optimal}, graphics~\cite{2015-solomon-siggraph,2016-bonneel-barycoord}, imaging~\cite{rabin2015convex,2016-Cuturi-siims}, generative models~\cite{WassersteinGAN,salimans2018improving}, biology~\cite{hashimoto2016learning,schiebinger2019optimal} or NLP~\cite{grave2018unsupervised,alaux2018unsupervised}. The key to using OT in these applications lies in the different forms of regularization of the original OT problem, as introduced in references~\cite{Villani09, SantambrogioBook}. Adding a small convex regularization to the classical linear cost not only helps on the algorithmic side, by convexifying the objective and allowing for faster solvers, but also introduces a regularity trade-off that prevents from overfitting on data measures.

\paragraph{Regularizing OT}
Although entropy-regularized OT is the most studied regularization of OT, due to its algorithmic advantages~\cite{CuturiSinkhorn}, several other convex regularizations of the transport plan have been proposed in the community: quadratically-regularized OT~\cite{essid2017quadratically}, OT with capacity constraints~\cite{korman2015optimal}, Group-Lasso regularized OT~\cite{courty2016optimal}, OT with Laplacian regularization~\cite{flamary2014optimal}, Tsallis Regularized OT~\cite{muzellec2017tsallis}, among others. On the other hand, regularizing the dual Kantorovich problem was shown in~\cite{liero2018optimal} to be equivalent to unbalanced OT, that is optimal transport with relaxed marginal constraints.

\paragraph{Understanding why regularization helps}
The question of understanding why regularizing OT proves critical has triggered several approaches. A compelling reason is statistical: Although classical OT suffers from the curse of dimensionality, as its empirical version converges at a rate of order $(1/n)^{1/d}$~\cite{dudley1969speed,fournier2015rate,weed2019sharp}, regularized OT and more precisely Sinkhorn divergences have a sample complexity of $O(1/\sqrt{n})$~\cite{pmlr-v89-genevay19a, mena2019statistical}. Entropic OT was also shown to perform maximum likelihood estimation in the Gaussian deconvolution model~\cite{rigollet2018entropic}. Taking another approach, \cite{dessein2018regularized,blondelsmoothandsparse} have considered general classes of convex regularizations and characterized them from a more geometrical perspective.


\paragraph{Robustness}
Recently, several papers~\cite{genevay2018learning, flamary2018wasserstein,deshpande2019max, kolouri2019generalized,niles2019estimation,patySRW} have proposed to maximize OT with respect to the ground cost, which can in turn be interpreted in light of ground metric learning~\cite{CuturiGroundMetric2014}. This approach can also be viewed as an instance of robust optimization~\cite{ben1998robust, ben2009robust, bertsimas2011theory}: instead of considering a data-dependent, hence unstable minimization problem $\min_x f_{\hat\theta}(x)$ where $\hat\theta$ represents the data, the robust optimization literature adversarially chooses the parameters $\theta$ in a neighborhood of the data: $\max_{\theta \in \Theta} \min_x f(x)$. Continuing along these lines, we make a connection between \emph{regularizing} and \emph{maximizing} OT.

\paragraph{Contributions}
Our main goal is to provide a novel interpretation of regularized optimal transport in terms of ground cost robustness: regularizing OT amounts to maximizing \textbf{un}regularized OT with respect to the ground cost. Our contributions are:
\begin{enumerate}
    \item We show that any convex regularization of the transport plan corresponds to ground-cost robustness (\S~\ref{sec:adversarial});
    \item We reinterpret classical regularizations of OT in the ground-cost adversarial setting (\S~\ref{sec:examples});
    \item We prove, under some technical assumption, a duality theorem for regularized OT, which we use to show that under the same assumption, there exists an optimal adversarial ground-cost that is separable (\S~\ref{sec:properties});
    \item We extend ground-cost robustness to the case of more than two measures (\S~\ref{sec:multi});
    \item We propose algorithms to solve the above-mentioned problems (\S\ref{sec:algos}) and illustrate them on data (\S~\ref{sec:experiments}).
\end{enumerate}
\section{Background on Optimal Transport and Notations}\label{sec:background}

Let $\X$ be a compact Hausdorff space, and define $\PX$ the set of Borel probability measures over $\X$. We write $\CX$ for the set of continuous functions from $\X$ to $\R$, endowed with the supremum norm. For $\phi, \psi \in \CX$, we write $\phi \oplus \psi \in \CXX$ for the function $\phi \oplus \psi : (x,y) \mapsto \phi(x) + \psi(y)$.

For $n \in \N$, we write $\range{n} = \{1, ..., n\}$. All vectors will be denoted with \textbf{bold} symbols. For a Boolean assertion $A$, we write $\iota(A)$ for its indicator function $\iota(A) = 0$ if $A$ is true and $\iota(A) = +\infty$ otherwise.

\paragraph{Kantorovich Formulation of OT}
For $\mu, \nu \in \PX$, we write $\Pi(\mu, \nu)$ for the set of couplings
\begin{multline*}
    \Pi(\mu, \nu) = \{ \pi \in \PXX \textrm{ s.t.} \,\forall A, B\subset\X \text{ Borel},\\
        \pi(A\times \X)=\mu(A), \pi(\X \times B)=\nu(B) \}.
\end{multline*}

For a real-valued continuous function $c \in \CXX$, the optimal transport cost between $\mu$ and $\nu$ is defined as
\begin{align}\label{eqn:background:kantorovich}
    \OTcost_c(\mu, \nu) := \inf_{\pi \in \Pi(\mu, \nu)} \int_{\XX} c(x,y) \,d\pi(x,y).
\end{align}
Since $c$ is continuous and $\X$ is compact, the infimum in~\eqref{eqn:background:kantorovich} is attained, see Theorem~1.4 in~\cite{SantambrogioBook}. Problem~\eqref{eqn:background:kantorovich} admits the following dual formulation, see Proposition~1.11 and Theorem~1.39 in~\cite{SantambrogioBook}:
\begin{align}\label{eqn:background:kantorovich_duality}
    \OTcost_c(\mu, \nu) = \max_{\substack{\phi, \psi \in \CX\\\phi \oplus \psi \leq c}} \int \phi \,d\mu + \int \psi \,d\nu.
\end{align}

\paragraph{Space of Measures}
Since $\X$ is compact, the dual space of $\CXX$ is the set $\MXX$ of Borel finite signed measures over $\XX$.
For $F: \MXX \to \R$, we recall that $F$ is Fréchet-differentiable at $\pi$ if there exists $\nabla F(\pi) \in \CXX$ such that for any $h \in \MXX$, as $t \to 0$
\[
    F(\pi + th) = F(\pi) + t \int \nabla F(\pi) \,dh + o(t).
\]
Similarly, $G: \CXX \to \R$ is Fréchet-differentiable at $c$ if there exists $\nabla G(c) \in \MXX$ such that for any $h \in \CXX$, as $t \to 0$
\[
    G(c + th) = G(c) + t \int h \,d\nabla G(c) + o(t).
\]

\paragraph{Legendre–Fenchel Transformation}
For any functional $F:\MXX \to \Rinf$, we can define its convex conjugate $F^*:\CXX \to \Rinf$ and biconjugate $F^{**}:\MXX \to \Rinf$ as
\begin{align*}
    F^*(c) &:= \sup_{\pi \in \MXX} \int c \,d\pi - F(\pi), \\
    F^{**}(\pi) &:= \sup_{c \in \CXX} \int c \,d\pi - F^*(c).
\end{align*}
$F^*$ is always lower semi-continuous (lsc) and convex as the supremum of continuous linear functions.

\paragraph{Specific notations}
For $F : \MXX \to \Rinf$, we write $\dom(F) = \left\{\pi \in \MXX \,|\, F(\pi) < +\infty\right\}$ for its domain and will say that $F$ is proper if $\dom(F) \neq \emptyset$.

We denote by $\FM$ the set of proper lsc convex functions $F : \MXX \to \Rinf$, and for $\mu, \nu \in \PX$, we define the set $\FM(\mu,\nu)$ of lsc convex functions that are proper on $\Pi(\mu,\nu)$:
\[
    \FM(\mu,\nu) = \left\{F \in \FM \,|\, \exists \pi \in \Pi(\mu,\nu), F(\pi) < +\infty \right\}.
\]
\section{Ground Cost Adversarial Optimal Transport}\label{sec:adversarial}

\subsection{Definition}

Instead of considering the classical \textit{linear} formulation of optimal transport~\eqref{eqn:background:kantorovich}, we consider in this paper the following more general \textit{nonlinear} convex formulation:

\begin{definition}\label{adversarial:def:Wf}
    Let $F \in \FM$. For $\mu,\nu \in \PX$, we define:
    \begin{align}\label{eqn:adversarial:Wf}
        \Wf_F(\mu,\nu) = \inf_{\pi\in\Pi(\mu,\nu)} F(\pi).
    \end{align}
\end{definition}

When $F(\pi) = \int c \,d\pi$, problem~\eqref{eqn:adversarial:Wf} corresponds to the classical optimal transport problem defined in~\eqref{eqn:background:kantorovich} and $\Wf_F = \OTcost_c$.

\begin{lemma}
    The infimum in~\eqref{eqn:adversarial:Wf} is attained. Moreover, if $F \in \FM(\mu,\nu)$, $\Wf_F(\mu,\nu) < +\infty$.
\end{lemma}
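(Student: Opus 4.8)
The plan is to separate the two claims. For the attainment of the infimum, the key observation is that $\Pi(\mu,\nu)$ is a weak-$*$ compact subset of $\MXX$ (this is the standard fact invoked for the classical problem just above, essentially Theorem~1.4 in~\cite{SantambrogioBook}: it is a closed subset of the unit ball of $\MXX = \CXX^*$, hence compact by Banach--Alaoglu, and it is nonempty since $\mu\otimes\nu \in \Pi(\mu,\nu)$). Since $F \in \FM$ is lower semi-continuous (by definition of $\FM$), it attains its infimum on this compact set. One subtlety worth flagging: one must make sure the relevant lower semi-continuity is with respect to the weak-$*$ topology, which is exactly the topology in which $\Pi(\mu,\nu)$ is compact; I would note that a proper lsc convex function on a dual Banach space is weak-$*$ lsc on bounded sets, or simply take ``lsc'' in the paper's convention to already mean weak-$*$ lsc, so that the argument is immediate.

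For the second claim, if $F \in \FM(\mu,\nu)$ then by definition of $\FM(\mu,\nu)$ there exists some $\pi_0 \in \Pi(\mu,\nu)$ with $F(\pi_0) < +\infty$. Then
\[
    \Wf_F(\mu,\nu) = \inf_{\pi\in\Pi(\mu,\nu)} F(\pi) \leq F(\pi_0) < +\infty.
\]
Moreover the infimum is finite (not $-\infty$): since $F$ is proper lsc convex on $\MXX$, it is bounded below by a continuous affine function, i.e.\ there exist $c \in \CXX$ and $\alpha \in \R$ with $F(\pi) \geq \int c\,d\pi - \alpha$ for all $\pi$; since $\X \times \X$ is compact and $c$ continuous, $\int c\,d\pi$ is bounded below uniformly over probability measures $\pi$, so $\inf_{\Pi(\mu,\nu)} F > -\infty$. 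Combining, $\Wf_F(\mu,\nu) \in \R$.

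The only mild obstacle is the first part: one has to be careful that ``lsc'' is interpreted in the weak-$*$ topology rather than the norm topology of $\MXX$, since attainment genuinely uses weak-$*$ compactness of $\Pi(\mu,\nu)$ and a norm-lsc convex function need not be weak-$*$ lsc in general (though for convex functions on a dual space the two notions of lower semi-continuity on bounded sets do coincide, so the gap is only apparent). Everything else — nonemptiness of $\Pi(\mu,\nu)$ via the product coupling, the chain of inequalities for finiteness, and the affine-minorant argument ruling out $-\infty$ — is routine and I would present it in a couple of lines.
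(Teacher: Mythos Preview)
Your argument is correct and is essentially the paper's own proof: compactness of $\Pi(\mu,\nu)$ plus lower semi-continuity of $F$ gives attainment (the paper just invokes Weierstrass's theorem in one line), and the finiteness follows from the existence of $\pi_0 \in \Pi(\mu,\nu)$ with $F(\pi_0)<+\infty$. Your additional care about the weak-$*$ topology and the affine-minorant lower bound are reasonable but go beyond what the paper spells out; note, however, that your parenthetical claim that norm-lsc and weak-$*$-lsc coincide on bounded sets for convex functions on a dual space is not true in general, so the cleanest fix is indeed the one you also suggest, namely to read ``lsc'' in the paper's convention as weak-$*$ lsc.
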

\begin{proof}
    We can apply Weierstrass's theorem since $\Pi(\mu,\nu)$ is compact and $F$ is lsc by definition. For $F \in \FM(\mu,\nu)$, there exists $\pi_0 \in \Pi(\mu,\nu)$ such that $F(\pi_0) < +\infty$, so $\Wf_F(\mu,\nu) \leq F(\pi_0) < +\infty$.
\end{proof}

The main result of this paper is the following interpretation of problem~\eqref{eqn:adversarial:Wf} as a ground-cost adversarial OT problem:
\begin{theorem}\label{adversarial:thm:Wf_is_adv}
    For $\mu,\nu \in \PX$ and $F \in \FM(\mu,\nu)$, minimizing $F$ over $\Pi(\mu,\nu)$ is equivalent to the following convex problem:
    \begin{align}\label{eqn:adversarial:Wf_is_adv}
        \Wf_F(\mu, \nu) = \sup_{c \in \CXX} \OTcost_c(\mu,\nu) - F^*(c).
    \end{align}
\end{theorem}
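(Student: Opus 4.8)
The plan is to recognize $\Wf_F(\mu,\nu) = \inf_{\pi \in \Pi(\mu,\nu)} F(\pi)$ as the value of a convex program and apply Fenchel–Rockafellar duality, with the ``ground cost'' $c$ playing the role of the dual variable. First I would rewrite the constrained infimum as an unconstrained one by introducing the convex indicator $\iota_{\Pi(\mu,\nu)}$ of the transport polytope on $\MXX$, so that $\Wf_F(\mu,\nu) = \inf_{\pi \in \MXX} \bigl( F(\pi) + \iota_{\Pi(\mu,\nu)}(\pi) \bigr)$. The key computation is then to identify the Legendre–Fenchel conjugate of the second term: for $c \in \CXX$ one has $\iota_{\Pi(\mu,\nu)}^*(c) = \sup_{\pi \in \Pi(\mu,\nu)} \int c\,d\pi = \OTcost_{-c}(\mu,\nu) \cdot(-1)$ up to a sign — more precisely $\sup_{\pi \in \Pi(\mu,\nu)} \int c\,d\pi = -\OTcost_{-c}(\mu,\nu)$, but since we will optimize over all of $\CXX$ it is cleaner to write it as the support function of the coupling set and note that replacing $c$ by $-c$ is a harmless change of variable; the upshot is that the conjugate of the indicator of $\Pi(\mu,\nu)$ is exactly (a reflected copy of) $\OTcost_c$.

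Next I would invoke the standard infimal-convolution / Fenchel duality identity: for two proper convex lsc functions $F$ and $G$ on a space in duality with $\CXX$,
\[
    \inf_{\pi} \bigl( F(\pi) + G(\pi) \bigr) = \sup_{c \in \CXX} \bigl( -F^*(-c) - G^*(c) \bigr) = \sup_{c \in \CXX} \bigl( -F^*(c) - G^*(-c) \bigr),
\]
provided a qualification condition holds that guarantees no duality gap and attainment on the dual side (or at least on the primal side). Taking $G = \iota_{\Pi(\mu,\nu)}$, so that $G^*(-c) = \sup_{\pi \in \Pi(\mu,\nu)} \int (-c)\,d\pi$, and observing $\sup_{\pi \in \Pi(\mu,\nu)} \int(-c)\,d\pi = -\inf_{\pi \in \Pi(\mu,\nu)}\int c\,d\pi = -\OTcost_c(\mu,\nu)$, the right-hand side becomes $\sup_{c}\bigl( -F^*(c) + \OTcost_c(\mu,\nu)\bigr)$, which is precisely~\eqref{eqn:adversarial:Wf_is_adv}. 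It remains to verify that $-\OTcost_c$ is concave in $c$ (immediate, as an infimum of linear functions $c \mapsto \int c\,d\pi$) and that $F^*$ is convex and lsc (automatic), so the objective on the right is concave, matching the claim.

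The main obstacle is the duality/qualification step: ensuring there is no duality gap and, ideally, that the infimum in the primal is attained (which we have from the earlier lemma via compactness of $\Pi(\mu,\nu)$, so the gap is the real issue). The hypothesis $F \in \FM(\mu,\nu)$ is exactly what is needed here — it gives a point $\pi_0 \in \Pi(\mu,\nu)$ with $F(\pi_0) < +\infty$, i.e. $\dom F \cap \Pi(\mu,\nu) \neq \emptyset$, so the sum $F + \iota_{\Pi(\mu,\nu)}$ is proper and the primal value is finite. To run Fenchel–Rockafellar cleanly in this infinite-dimensional setting (measures as the dual of $\CXX$), one typically needs a continuity or relative-interior-type condition on one of the functions; I would argue that because $\OTcost_c$ is finite and continuous in $c$ for every $c \in \CXX$ (the support function of the weak-$*$ compact set $\Pi(\mu,\nu)$ is real-valued and Lipschitz in sup-norm), the conjugate pair is well-behaved enough that strong duality holds — concretely, $\iota_{\Pi(\mu,\nu)}^{**} = \iota_{\Pi(\mu,\nu)}$ since the set is convex and weak-$*$ closed, and $F^{**} = F$ since $F \in \FM$, so one can pass through biconjugates. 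I would then close by remarking that the theorem does not assert attainment of the supremum on the right — only equality of values — so the weaker form of the duality theorem suffices and no additional regularity on $F$ beyond lower semicontinuity, convexity and properness on $\Pi(\mu,\nu)$ is required.
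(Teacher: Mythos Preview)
Your argument is correct in outline but takes a different route from the paper. The paper does not invoke Fenchel--Rockafellar on the sum $F + \iota_{\Pi(\mu,\nu)}$; instead it writes $F = F^{**}$ directly (Fenchel--Moreau, using that $F$ is proper, convex, lsc), obtaining
\[
    \min_{\pi \in \Pi(\mu,\nu)} F(\pi) = \min_{\pi \in \Pi(\mu,\nu)} \sup_{c \in \CXX} \int c\,d\pi - F^*(c),
\]
and then swaps $\min$ and $\sup$ by Sion's minimax theorem, using that $\Pi(\mu,\nu)$ is convex and weak-$*$ compact, the objective is linear (hence convex and continuous) in $\pi$, and concave and usc in $c$ (since $F^*$ is lsc). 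The inner minimum in $\pi$ is then exactly $\OTcost_c(\mu,\nu)$.

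The main difference is where the burden of justification lies. Your Fenchel--Rockafellar framing is clean once strong duality is known, but in this infinite-dimensional pairing the standard qualification condition (continuity of one primal function at a point in the domain of the other) is \emph{not} available: neither $F$ nor $\iota_{\Pi(\mu,\nu)}$ is continuous anywhere in general. Your remark that the support function of $\Pi(\mu,\nu)$ is Lipschitz concerns $G^*$, not $G$, and does not by itself close the gap. When you fall back on ``pass through biconjugates'' and weak-$*$ compactness of $\Pi(\mu,\nu)$, you are in effect reproducing the paper's Sion argument---that is precisely the step that swaps the $\inf$ and the $\sup$. So the two proofs coincide at the only nontrivial point; the paper's version is simply more explicit that compactness of $\Pi(\mu,\nu)$ in a minimax theorem is what does the work, and avoids having to locate a suitable qualification hypothesis for Fenchel--Rockafellar in $(\CXX,\MXX)$.
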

\begin{proof}
    Since $F$ is proper, lsc and convex, Fenchel-Moreau theorem ensures that it is equal to its convex biconjugate $F^{**}$, so:
    \begin{align*}
        \min_{\pi \in \Pi(\mu,\nu)} F(\pi) &= \min_{\pi \in \Pi(\mu,\nu)} F^{**}(\pi)\\
        &= \adjustlimits\min_{\pi \in \Pi(\mu,\nu)} \sup_{c \in \CXX} \int c\,d\pi - F^*(c).
    \end{align*}
    Define the objective $l(\pi,c) := \int c\,d\pi - F^*(c)$. Since $F^*$ is lsc as the convex conjugate of $F$, for any $\pi \in \Pi(\mu,\nu)$, $l(\pi, \cdot)$ is usc. It is also concave as the sum of concave functions. Likewise, for any $c \in \CXX$, $l(\cdot, c)$ is continuous and convex (in fact linear). Since $\Pi(\mu,\nu)$ and $\CXX$ are convex, and $\Pi(\mu,\nu)$ is compact, we can use Sion's minimax theorem to swap the min and the sup:
    \[
        \min_{\pi \in \Pi(\mu,\nu)} F(\pi) = \adjustlimits\sup_{c \in \CXX} \min_{\pi \in \Pi(\mu,\nu)} \int c\,d\pi - F^*(c).
    \]
    Finally, $c \mapsto \OTcost_c(\mu,\nu) - F^*(c)$ is concave since $F^*$ is convex and $c \mapsto \OTcost_c(\mu,\nu)$ is concave as the minimum of linear functionals.
    \vskip-0.65cm
\end{proof}
\vskip+0.3cm
\begin{remark}
    Note that the inequality
    \[
        \Wf_F(\mu, \nu) \geq \sup_{c \in \CXX} \OTcost_c(\mu,\nu) - F^*(c)
    \]
    is in fact verified for any $F:\MXX\to\Rinf$ since $F \geq F^{**}$ is always verified.
\end{remark}

The supremum in equation~\eqref{eqn:adversarial:Wf_is_adv} is not necessarily attained. Under some regularity assumption on $F$, we show that the supremum is attained and relate the optimal couplings and the optimal ground costs:
\begin{proposition}\label{adversarial:prop:c=grad(pi)}
    Let $\mu, \nu \in \PX$ and $F \in \FM(\mu,\nu)$. Suppose that $F$ is Fréchet-differentiable on $\Pi(\mu,\nu)$. Then the supremum in~\eqref{eqn:adversarial:Wf_is_adv} is attained at $\opt{c} = \nabla F(\opt{\pi})$ where $\opt{\pi}$ is any minimizer of~\eqref{eqn:adversarial:Wf}.
    Conversely, suppose $F^*$ is Fréchet-differentiable everywhere. If $\opt{c}$ is the unique maximizer in~\eqref{eqn:adversarial:Wf_is_adv}, then $\opt{\pi} = \nabla F^*(\opt{c})$ is a minimizer of~\eqref{eqn:adversarial:Wf}.
\end{proposition}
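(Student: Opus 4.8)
The plan is to deduce the proposition from the minimax identity established in Theorem~\ref{adversarial:thm:Wf_is_adv} together with the first-order optimality conditions that Fréchet-differentiability provides. Write $l(\pi,c) = \int c\,d\pi - F^*(c)$ as in the proof of the theorem, so that $\Wf_F(\mu,\nu) = \min_\pi \sup_c l(\pi,c) = \sup_c \min_\pi l(\pi,c)$, and recall the standard fact that any saddle point of $l$ over $\Pi(\mu,\nu)\times\CXX$ furnishes simultaneously a minimizer of the outer min and a maximizer of the outer sup, and conversely that a pair of optimizers $(\opt\pi,\opt c)$ for the two sides is automatically a saddle point because of the equality of the two values. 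So the task reduces to: (i) given a minimizer $\opt\pi$ of $\Wf_F$, exhibit a $\opt c$ with $(\opt\pi,\opt c)$ a saddle point; and (ii) given the unique maximizer $\opt c$, exhibit the matching $\opt\pi$.

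For the forward direction, I would take $\opt c = \nabla F(\opt\pi)$, which exists by the differentiability hypothesis on $\Pi(\mu,\nu)$. The Fenchel–Young equality for a proper lsc convex $F$ states that $F(\opt\pi) + F^*(\opt c) = \int \opt c\,d\opt\pi$ precisely when $\opt c \in \partial F(\opt\pi)$; since $F$ is Fréchet-differentiable at $\opt\pi$, $\nabla F(\opt\pi)$ is the unique subgradient (in $\CXX$), so the equality holds for $\opt c = \nabla F(\opt\pi)$. This gives $l(\opt\pi,\opt c) = \int\opt c\,d\opt\pi - F^*(\opt c) = F(\opt\pi) = \Wf_F(\mu,\nu)$, and since $\sup_c l(\opt\pi,c) = F^{**}(\opt\pi) = F(\opt\pi)$ while $\min_\pi l(\pi,\opt c) \le l(\opt\pi,\opt c)$, comparing with the common value $\Wf_F(\mu,\nu) = \sup_c\min_\pi l(\pi,c)$ forces $\opt c$ to attain the outer supremum, i.e. $\OTcost_{\opt c}(\mu,\nu) - F^*(\opt c) = \Wf_F(\mu,\nu)$. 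One subtlety I should be careful about: $\partial F(\opt\pi)$ is a priori a subset of the dual of $\MXX$, which is larger than $\CXX$; but Fréchet-differentiability as defined in the excerpt produces a gradient living in $\CXX$, and that is exactly what is needed for $\opt c$ to be admissible in~\eqref{eqn:adversarial:Wf_is_adv}, so I would invoke that definition explicitly rather than an abstract subdifferential statement.

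For the converse, assume $F^*$ is Fréchet-differentiable everywhere and that $\opt c$ is the \emph{unique} maximizer of $c \mapsto \OTcost_c(\mu,\nu) - F^*(c)$. Set $\opt\pi = \nabla F^*(\opt c)$. First-order optimality for the concave problem at its maximizer gives $\nabla(\OTcost_\cdot(\mu,\nu))(\opt c) = \nabla F^*(\opt c) = \opt\pi$; and an envelope/Danskin argument on $\OTcost_c(\mu,\nu) = \max_{\phi\oplus\psi\le c}\int\phi\,d\mu+\int\psi\,d\nu = \min_{\pi\in\Pi(\mu,\nu)}\int c\,d\pi$ identifies the gradient of $c\mapsto\OTcost_c(\mu,\nu)$ with an optimal transport plan for cost $\opt c$; so $\opt\pi\in\Pi(\mu,\nu)$ and $\int\opt c\,d\opt\pi = \OTcost_{\opt c}(\mu,\nu)$. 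Combining, $l(\opt\pi,\opt c) = \OTcost_{\opt c}(\mu,\nu) - F^*(\opt c) = \Wf_F(\mu,\nu) = \min_\pi\sup_c l(\pi,c)$, and since $\sup_c l(\opt\pi,c) \ge l(\opt\pi,\opt c)$ while $\sup_c l(\opt\pi,c) = F^{**}(\opt\pi) = F(\opt\pi) \ge \Wf_F(\mu,\nu)$ by feasibility, all these are equalities, so $F(\opt\pi) = \Wf_F(\mu,\nu)$, i.e. $\opt\pi$ is a minimizer of~\eqref{eqn:adversarial:Wf}.

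The main obstacle I anticipate is making the envelope-theorem step on $c\mapsto\OTcost_c(\mu,\nu)$ rigorous in the infinite-dimensional setting: $\OTcost_c$ is a minimum of linear functionals of $c$, hence concave, and is differentiable at $\opt c$ iff the optimal plan is unique, in which case $\nabla\OTcost_{\opt c}(\mu,\nu) = \opt\pi$; the uniqueness of $\opt c$ as maximizer plus differentiability of $F^*$ should be leveraged to guarantee that $\opt\pi = \nabla F^*(\opt c)$ is indeed the (necessarily unique) optimal plan, rather than merely a subgradient. If a clean differentiability statement for $\OTcost_c$ is not available off the shelf, the safe fallback is to argue entirely through the saddle-point characterization: show $(\opt\pi,\opt c)$ satisfies $l(\opt\pi, c) \le l(\opt\pi,\opt c) \le l(\pi,\opt c)$ for all $\pi,c$ directly from the two differentiability hypotheses and the Fenchel–Young equalities, and then conclude by the generic saddle-point lemma. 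I would present the proof in that order — first-order conditions $\Rightarrow$ Fenchel–Young equalities $\Rightarrow$ saddle point $\Rightarrow$ joint optimality — since it isolates the one delicate analytic input and keeps the rest purely formal.
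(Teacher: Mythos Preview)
Your forward direction matches the paper's: both identify $\opt c = \nabla F(\opt\pi)$ via Fenchel--Young (the paper phrases it as the unique ``best response'' to $\opt\pi$ in the inner $\sup_c$, you spell out the saddle-point computation, but the content is identical).

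For the converse, however, the paper takes a different and shorter route. Rather than invoking an envelope theorem on $c \mapsto \OTcost_c(\mu,\nu)$, it simply bootstraps from the forward direction: since every primal minimizer $\opt\pi$ produces a dual maximizer $\nabla F(\opt\pi)$, and the maximizer $\opt c$ is assumed unique, necessarily $\opt c = \nabla F(\opt\pi)$; Fenchel duality then gives $\opt\pi \in \partial F^*(\opt c) = \{\nabla F^*(\opt c)\}$, so $\nabla F^*(\opt c) = \opt\pi$ is a minimizer. This sidesteps your Danskin worry entirely, at the price of implicitly using \emph{both} differentiability hypotheses (the converse leans on the forward result, hence on $F$ being differentiable). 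Your route, by contrast, needs only the hypothesis on $F^*$, and once you state the first-order condition via the superdifferential of the concave map $c\mapsto\OTcost_c(\mu,\nu)$ --- which, as a minimum of linear functionals over the compact convex set $\Pi(\mu,\nu)$, has superdifferential equal to the set of optimal plans --- the argument goes through without ever using uniqueness of $\opt c$. So your converse is slightly more general and self-contained, while the paper's is more economical and avoids the infinite-dimensional envelope step you flagged as the main obstacle.
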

See a proof in appendix. In section~\ref{sec:properties}, we will further characterize $\opt{c}$ for a certain class of functions $F \in \FM$.

One interesting particular case of Theorem~\ref{adversarial:thm:Wf_is_adv} is when the convex cost $\pi \mapsto F(\pi)$ is a convex regularization of the classical linear optimal transport:
\begin{corollary}\label{adversarial:thm:regularization}
    Let $c_0 \in \CXX$, $\mu,\nu \in \PX$. Let $\epsilon > 0$ and $R \in \FM(\mu,\nu)$. Then:
    \begin{align}\label{eqn:adversarial:regularized_is_adv}
        &\min_{\pi \in \Pi(\mu,\nu)} \int c_0 \,d\pi + \epsilon R(\pi) \nonumber\\
        &= \sup_{c \in \CXX} \OTcost_c(\mu,\nu) - \epsilon R^*\left(\frac{c - c_0}{\epsilon}\right).
    \end{align}
\end{corollary}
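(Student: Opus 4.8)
The plan is to obtain Corollary~\ref{adversarial:thm:regularization} as a direct specialization of Theorem~\ref{adversarial:thm:Wf_is_adv}. First I would set $F(\pi) := \int c_0 \,d\pi + \epsilon R(\pi)$ and check that $F \in \FM(\mu,\nu)$: it is convex as the sum of a linear functional and the convex function $\epsilon R$; it is lsc because $\pi \mapsto \int c_0 \,d\pi$ is weak-* continuous (as $c_0 \in \CXX$) and $\epsilon R$ is lsc; and it is proper on $\Pi(\mu,\nu)$ since $R \in \FM(\mu,\nu)$ gives some $\pi_0 \in \Pi(\mu,\nu)$ with $R(\pi_0) < +\infty$, hence $F(\pi_0) < +\infty$ (the linear term is finite because $\X$ is compact and $c_0$ continuous). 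So Theorem~\ref{adversarial:thm:Wf_is_adv} applies and yields
\[
    \min_{\pi \in \Pi(\mu,\nu)} F(\pi) = \sup_{c \in \CXX} \OTcost_c(\mu,\nu) - F^*(c).
\]

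The remaining work is the computation of $F^*$. The key identity is the standard conjugate rule for an affine perturbation plus a positive scaling: if $F(\pi) = \int c_0 \,d\pi + \epsilon R(\pi)$, then
\[
    F^*(c) = \sup_{\pi \in \MXX} \int c \,d\pi - \int c_0 \,d\pi - \epsilon R(\pi) = \sup_{\pi \in \MXX} \int (c - c_0) \,d\pi - \epsilon R(\pi).
\]
Writing $\int(c-c_0)\,d\pi = \epsilon \int \frac{c-c_0}{\epsilon}\,d\pi$ and factoring out $\epsilon$, the supremum becomes $\epsilon \sup_{\pi} \big( \int \frac{c-c_0}{\epsilon}\,d\pi - R(\pi) \big) = \epsilon R^*\!\left(\frac{c-c_0}{\epsilon}\right)$, using $\epsilon > 0$ so the scaling does not flip the sup. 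Substituting this expression for $F^*$ into the identity above gives exactly~\eqref{eqn:adversarial:regularized_is_adv}.

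I expect the only genuine subtlety — and the step I would be most careful about — is verifying the hypotheses of Theorem~\ref{adversarial:thm:Wf_is_adv} for the particular $F$ at hand, namely lower semicontinuity and properness on $\Pi(\mu,\nu)$; once $F \in \FM(\mu,\nu)$ is established, the rest is the elementary conjugate algebra above, which requires nothing beyond the definition of the Legendre–Fenchel transform and the positivity of $\epsilon$. No additional regularity (e.g.\ differentiability) is needed, since the corollary only asserts the value equality and not attainment of the supremum.
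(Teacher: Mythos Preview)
Your proposal is correct and follows essentially the same approach as the paper: define $F(\pi) = \int c_0\,d\pi + \epsilon R(\pi)$, apply Theorem~\ref{adversarial:thm:Wf_is_adv}, and compute $F^*(c) = \epsilon R^*\!\left(\frac{c-c_0}{\epsilon}\right)$ via the same factoring-out-$\epsilon$ manipulation. If anything, you are more careful than the paper in explicitly verifying that $F \in \FM(\mu,\nu)$, which the paper leaves implicit.
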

\begin{proof}
    We apply theorem~\ref{adversarial:thm:regularization} with $F(\pi) = \int c_0 \,d\pi + \epsilon R(\pi)$, for which we only need to compute the convex conjugate:
    \begin{align*}
        F^*(c) &= \sup_{\pi \in \MXX} \int c-c_0 \,d\pi - \epsilon R(\pi)\\
        &= \epsilon \sup_{\pi \in \MXX} \int \frac{c - c_0}{\epsilon} \,d\pi - R(\pi)\\
        &= \epsilon R^*\left(\frac{c - c_0}{\epsilon}\right).
    \end{align*}
    \vskip-0.81cm
\end{proof}
Corollary~\ref{adversarial:thm:regularization} shows that the ground cost $c_0$ in regularized optimal transport acts as a prior on the adversarial ground cost. Indeed, in equation~\eqref{eqn:adversarial:regularized_is_adv} the penalization term $\epsilon R^*\left(\frac{c - c_0}{\epsilon}\right)$ forces any optimal adversarial ground cost to be ``close'' to $c_0$, the closeness being measured in terms of the convex conjugate of the regularization: $R^*$.

\begin{remark}\label{adv:rmk:concave}
    We can also consider the minimization of a proper usc concave function $F$ over $\Pi(\mu,\nu)$. Since $-F \in \FM$, by reusing the argument of the proof of Theorem~\ref{adversarial:thm:Wf_is_adv} (see a proof in appendix):
    \begin{align*}
        \inf_{\pi \in \Pi(\mu,\nu)} F(\pi) = \inf_{c \in \CXX} \OTcost_c(\mu,\nu) + (-F)^{*}(-c).
    \end{align*}
    Minimizing a concave function of the transport plan $\pi \in \Pi(\mu,\nu)$, or equivalently maximizing a convex function of $\pi$, amounts to finding a ground cost $c \in \CXX$ that \textit{minimizes} the transport cost between $\mu$ and $\nu$ plus a convex penalization on $c$. Note that this is not a convex problem since the objective is the sum of a concave and a convex functions. When $\mu$ and $\nu$ are discrete measures, $\Pi(\mu,\nu)$ is a finite-dimensional compact polytope so one of its extreme points has to be a minimizer of $F$.
\end{remark}

In the ground cost maximization problem, the maximization is carried out on any continuous function $c$ on $\XX$, and in particular we do not impose that $c$ takes only nonnegative values. In other words, an optimal adversarial ground cost may take negative values, which prevents us from directly interpreting optimal adversarial ground costs as suitable dissimilarity measures over $\X$. In the following subsection, we impose that $c \geq 0$ in the adversarial problem when the space $\X$ is discrete and prove an analogue of Corollary~\ref{adversarial:thm:regularization}.

\subsection{Discrete Separable Case}\label{subsec:discreteseparable}
In this subsection, we will focus on the discrete case where the space $\X = \range{n}$ for some $n \in \N$. A probability measure $\mu \in \PX$ is then a histogram of size $n$ that we will represent by a vector $\bmu \in \Rn_+$ such that $\sum_{i=1}^n \bmu_i = 1$. Cost functions $c \in \CXX$ and transport plans $\pi \in \Pi(\bmu,\bnu)$ are now matrices $\bc, \bpi \in \Rnn$.

We focus on regularization functions $R$ that are separable, \emph{i.e.} of the form
\[
    R(\bpi) = \sum_{i=1}^n \sum_{j=1}^n R_{ij}(\bpi_{ij})
\]
for some differentiable convex proper lsc $R_{ij}: \R \to \R$.

In applications, it is natural to constrain the adversarial ground cost $\bc \in \Rnn$ to take nonnegative entries. Adding this constraint on the adversarial cost corresponds to linearizing ``at short range'' the regularization $R$ for ``small transport values''.
\vspace{-0.4cm}
\begin{figure}[!h]
    \centering
    \includegraphics[width=0.49\textwidth]{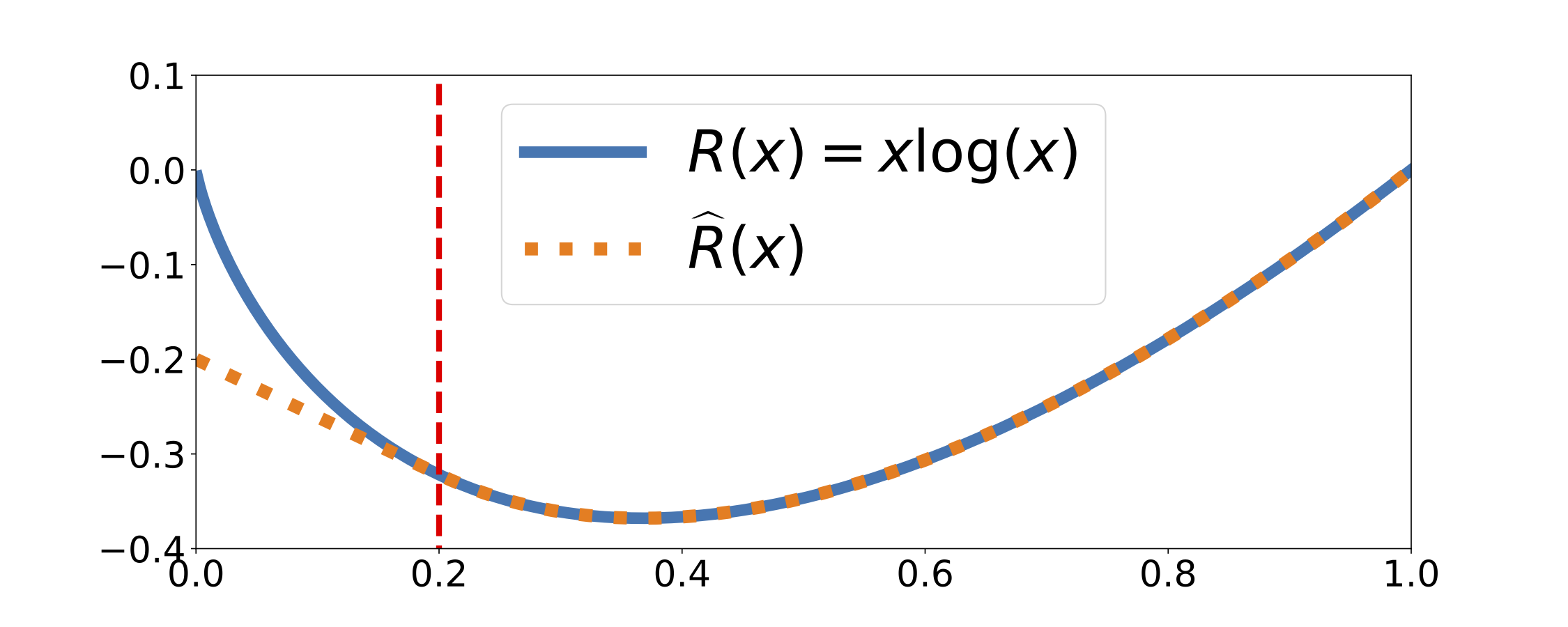}
    \vskip-0.5cm
    \caption{The entropy regularization $R(x) = x \log(x)$ and its linearized version $\widehat R(x)$ for small transport values.}
    \label{fig:adversarial:linearized_entropy}
\end{figure}

\begin{proposition}\label{adversarial:prop:discrete_positivecost}
    Let $\epsilon > 0$. For $\bmu,\bnu \in \PX$, it holds:
    \begin{align}\label{eqn:adversarial:discreteseparable:linearized}
        &\sup_{\bc \in \Rnn_+} \OTcost_{\bc}(\bmu, \bnu) - \epsilon \sum_{ij} R_{ij}^* \left(\frac{\bc_{ij} - {\bc_0}_{ij}}{\epsilon}\right) \nonumber\\
        &= \min_{\bpi \in \Pi(\bmu,\bnu)} \langle \bc_0, \bpi \rangle + \epsilon \sum_{ij} \widehat R_{ij}(\bpi_{ij})
    \end{align}
    where $\widehat R_{ij} : \R \to \R$ is the continuous convex function defined as
    \[
        \widehat R_{ij}(x) := 
        \begin{cases}
            R_{ij}(x) & \text{if } x \geq {R_{ij}^*}'\left( -\frac{{\bc_0}_{ij}}{\epsilon} \right)\\
            \frac{-{\bc_0}_{ij}}{\epsilon} x - R_{ij}^*\left( -\frac{{\bc_0}_{ij}}{\epsilon} \right) & \text{otherwise.}
        \end{cases}
    \]
    Moreover, if $R_{ij}$ is of class $C^1$, then $\widehat R_{ij}$ is also $C^1$.
\end{proposition}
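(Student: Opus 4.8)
The plan is to derive~\eqref{eqn:adversarial:discreteseparable:linearized} from Corollary~\ref{adversarial:thm:regularization} by absorbing the constraint $\bc\in\Rnn_+$ into an indicator penalty, swapping a $\sup$ and a $\min$ via Sion's minimax theorem, and then reducing the inner problem to $n^2$ independent one‑dimensional conjugate computations.

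First I would set $F(\bpi)=\langle\bc_0,\bpi\rangle+\epsilon R(\bpi)$, so that (by the computation in the proof of Corollary~\ref{adversarial:thm:regularization}) $F^*(\bc)=\epsilon R^*\!\big(\tfrac{\bc-\bc_0}{\epsilon}\big)=\epsilon\sum_{ij}R_{ij}^*\!\big(\tfrac{\bc_{ij}-{\bc_0}_{ij}}{\epsilon}\big)$, which is separable. Since $\OTcost_{\bc}(\bmu,\bnu)=\min_{\bpi\in\Pi(\bmu,\bnu)}\langle\bc,\bpi\rangle$, the left‑hand side of~\eqref{eqn:adversarial:discreteseparable:linearized} equals
\[
\sup_{\bc\in\Rnn_+}\ \min_{\bpi\in\Pi(\bmu,\bnu)}\ \langle\bc,\bpi\rangle-F^*(\bc).
\]
Exactly as in the proof of Theorem~\ref{adversarial:thm:Wf_is_adv}, the objective is concave and upper semicontinuous in $\bc$ (because $F^*$ is lsc and convex) and linear continuous in $\bpi$, and $\Pi(\bmu,\bnu)$ is convex compact while $\Rnn_+$ is convex; Sion's minimax theorem then swaps the $\sup$ and the $\min$, giving $\min_{\bpi\in\Pi(\bmu,\bnu)}G(\bpi)$ with $G(\bpi):=\sup_{\bc\in\Rnn_+}\langle\bc,\bpi\rangle-F^*(\bc)$. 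By separability $G(\bpi)=\sum_{ij}G_{ij}(\bpi_{ij})$ with $G_{ij}(t):=\sup_{s\geq0}st-\epsilon R_{ij}^*\!\big(\tfrac{s-{\bc_0}_{ij}}{\epsilon}\big)$, and $G$ is proper on $\Pi(\bmu,\bnu)$ since $G\leq F$ pointwise (add an indicator to $F^*$ and conjugate) and $R\in\FM(\bmu,\bnu)$.

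The heart of the argument is the scalar identity $G_{ij}(t)={\bc_0}_{ij}\,t+\epsilon\,\widehat R_{ij}(t)$. Writing $a:=-{\bc_0}_{ij}/\epsilon$ and substituting $u=(s-{\bc_0}_{ij})/\epsilon$, this reduces to $\sup_{u\geq a}\big[ut-R_{ij}^*(u)\big]=\widehat R_{ij}(t)$. Set $\varphi(u):=ut-R_{ij}^*(u)$, a concave function, and $x^*:={R_{ij}^*}'(a)$ (well defined by hypothesis, which also forces $a\in\mathrm{int}\,\dom R_{ij}^*$); then $\varphi'(a)=t-x^*$. If $t\geq x^*$, concavity makes $\varphi$ nondecreasing on $(-\infty,a]$, so the constrained supremum equals the unconstrained one, namely $R_{ij}^{**}(t)=R_{ij}(t)$ (as $R_{ij}$ is proper lsc convex). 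If $t<x^*$, then $\varphi'<0$ on $[a,\infty)$, so the supremum is attained at $u=a$ and equals $at-R_{ij}^*(a)$. These are precisely the two branches of $\widehat R_{ij}$, so summing over $ij$ gives $G(\bpi)=\langle\bc_0,\bpi\rangle+\epsilon\sum_{ij}\widehat R_{ij}(\bpi_{ij})$, the right‑hand side.

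Finally I would record the regularity of $\widehat R_{ij}$. Convexity and continuity on each of the two pieces are immediate; the key point at the junction $t=x^*$ is that $x^*={R_{ij}^*}'(a)$ is equivalent to $a\in\partial R_{ij}(x^*)$, i.e.\ $R_{ij}'(x^*)=a$ since $R_{ij}$ is differentiable. Hence the affine branch has slope $a$, matching the derivative of $R_{ij}$ at $x^*$; together with the Fenchel equality $R_{ij}(x^*)+R_{ij}^*(a)=a\,x^*$ this yields continuity at $x^*$, convexity, and — if $R_{ij}\in C^1$ — membership in $C^1$. The main obstacle is not the minimax step (which mirrors Theorem~\ref{adversarial:thm:Wf_is_adv}) but keeping the scalar analysis clean: pinning down $\varphi'(a)=t-x^*$ through the inverse‑derivative relation between $R_{ij}$ and $R_{ij}^*$, and dealing with corner cases such as possible non‑attainment of the unconstrained supremum and the fact that $R_{ij}^*$ may be differentiable only on the interior of its domain.
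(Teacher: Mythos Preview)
Your proposal is correct and follows essentially the same route as the paper: apply Sion's minimax theorem to swap $\sup_{\bc\in\Rnn_+}$ and $\min_{\bpi\in\Pi(\bmu,\bnu)}$, use separability to reduce to $n^2$ scalar problems, and then do a two-case analysis of the constrained conjugate. Your change of variable $u=(s-{\bc_0}_{ij})/\epsilon$ and your test on $\varphi'(a)=t-x^*$ are just a repackaging of the paper's test on the sign of $z_{ij}=f_{ij}'(\bpi_{ij})$ (the two conditions are equivalent via $(R_{ij}')^{-1}={R_{ij}^*}'$), and your $C^1$ argument at the junction is in fact slightly more explicit than the paper's, which only writes out the continuity part.
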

\section{Examples}\label{sec:examples}

\subsection{Ground Cost Adversarial Interpretation of Classical OT Regularizations}\label{subsec:examples:examples}

As presented in the introduction, several convex regularizations $R$ have been proposed in the literature. We give the ground cost adversarial counterpart for some of them: two examples in the continuous setting, and four $p$-norm based regularizations in the discrete case.

\begin{example}[Entropic Regularization]\label{adversarial:example:entropic}
    Let $\mu, \nu \in \PX$. For $\pi \in \Pi(\mu,\nu)$, we define its relative entropy as $\KL(\pi \| \mu\otimes\nu) = \int \log\frac{d\pi}{d\mu\otimes\nu} d\pi$.
    Then for $c_0 \in \CXX$ and $\epsilon > 0$, it holds:
    \begin{align*}
        &\min_{\pi \in \Pi(\mu,\nu)} \int c_0 \,d\pi + \epsilon \KL(\pi \| \mu\otimes\nu) \\
        &= \sup_{c \in \CXX} \OTcost_c(\mu, \nu) - \epsilon \int \exp\left(\frac{c - c_0}{\epsilon}\right) \,d\mu\otimes\nu + \epsilon.
    \end{align*}
\end{example}
\begin{proof}
    For $\pi \in \MXX$, let 
    \[
        R(\pi) = \begin{cases}
                    \int \log \frac{d\pi}{d\mu\otimes\nu} d\pi - \int d\pi + 1 & \text{if } \pi \ll \mu\otimes\nu\\
                    +\infty & \text{otherwise.}
                \end{cases}
    \]
    $R$ is convex, and using proposition~7 in~\cite{feydyinterpolating},
    \[
        R^*(c) = \int e^c - 1 \, d\mu\otimes\nu.
    \]
    Applying corollary~\ref{adversarial:thm:regularization} concludes the proof.
\end{proof}

Another case of interest is the so-called Subspace Robust Wasserstein distance recently proposed by~\cite{patySRW}. Here, the set of adversarial metrics is parameterized by a finite-dimensional parameter $\Omega$, which allows to recover an adversarial metric defined on the whole space even when the measures are finitely supported.
\begin{example}[Subspace Robust Wasserstein]\label{adversarial:example:SRW}
    Let $d \in \N$, $k \in \range{d}$ and $\mu, \nu \in \PRd$ with a finite second-order moment. For $\pi \in \Pi(\mu,\nu)$, define $V_\pi = \int (x-y)(x-y)^\top d\pi(x,y)$ and $\lambda_1(V_\pi) \geq \ldots \geq \lambda_d(V_\pi)$ its ordered eigenvalues.
    
    Then $F: \pi \mapsto \sum_{l=1}^k \lambda_l(V_\pi)$ is convex, and
    \[
        \SRW_k(\mu, \nu) := \min_{\pi \in \Pi(\mu,\nu)} \sum_{l=1}^k \lambda_l(V_\pi) = \max_{\substack{0 \preceq \Omega \preceq I\\\trace(\Omega)=k}} \OTcost_{d_\Omega^2}(\mu,\nu)
    \]
    where $d_\Omega^2(x,y) = (x-y)^\top \Omega (x-y)$ is the squared Mahalanobis distance.
\end{example}
\begin{proof}
    See Theorem~1 in~\cite{patySRW}. Note that in this case, $\X = \Rd$ is not compact. This is not a problem since $F^* \equiv +\infty$ outside a compact set, \emph{i.e.} the set on metrics on which the maximization takes place is compact. Indeed, one can show that:
    \[
        \!\!\!\!F^*(c) = \iota(\exists 0 \preceq \Omega \preceq I \text{ with } \trace(\Omega)=k \text{ s.t. } c=d_\Omega^2).
    \]
    \vskip-0.90cm
\end{proof}

Let us now consider $p$-norm based examples, which will subsume quadratically-regularized ($p=2$) OT studied in~\cite{essid2017quadratically,lorenz2019quadratically}, capacity-constrained ($p=+\infty$) OT proposed by~\cite{korman2015optimal} and Tsallis regularized ($p < 0$) OT introduced by~\cite{muzellec2017tsallis}.

For a matrix $\bw \in \Rnn_+$ with $\sum_{ij} \bw_{ij} = n^2$ and $\bpi \in \Rnn$, we denote by $\|\bpi\|_{\bw, p}^p = \sum_{ij} \bw_{ij} | \bpi_{ij} |^p$ the $\bw$-weighted (powered) $p$-norm of $\bpi$. We also write $1/\bw$ for the matrix defined by $(1/\bw)_{ij} = 1/\bw_{ij}$. In the following, except otherwise mentioned, we take $p, q \in [1, +\infty]$ such that $1/p + 1/q = 1$, $\bc_0 \in \Rnn$, $\epsilon > 0$.
\begin{example}[$\|\cdot\|_{\bw,p}^p$ Regularization]\label{adversarial:ex:pp-reg}
    \begin{align*}
        \min_{\bpi \in \Pi(\bmu, \bnu)} &\langle \bc_0, \bpi \rangle + \epsilon \frac{1}{p} \|\bpi\|_{\bw,p}^p \\
        &= \sup_{\bc \in \Rnn} \OTcost_{\bc}(\bmu, \bnu) - \epsilon\frac{1}{q} \left\|\frac{\bc - \bc_0}{\epsilon}\right\|_{{1/\bw}^{q-1},q}^q.
    \end{align*}
    In particular when $p=2$ and $\bw = 1$, this corresponds to quadratically-regularized OT studied in~\cite{essid2017quadratically,lorenz2019quadratically}.
\end{example}
We give the details of the (straightforward) computations in the appendix.

\begin{example}[$\|\cdot\|_{\bw,p}$ Penalization]
    \[
        \min_{\bpi \in \Pi(\bmu, \bnu)} \langle \bc_0, \bpi \rangle + \epsilon \|\bpi\|_{\bw, p} 
        = \sup_{\substack{\bc \in \Rnn\\ \|\bc - \bc_0\|_{{1/\bw}, q} \leq \epsilon}} \OTcost_{\bc}(\bmu, \bnu).
    \]
\end{example}
\begin{proof}
    We apply Corollary~\ref{adversarial:thm:regularization} with $R: \Rnn \to \Rnn$ defined as $R(\bpi) = \|\bpi\|_{\bw, p}$, for which we need to compute its convex conjugate. We know that the dual of $\|\cdot\|_p$ is $\iota(\|\cdot\|_q \leq 1)$, and using classical results about convex conjugates, $\|\cdot\|_{\bw, p}^* = \iota(\|\cdot\|_{1/\bw, q} \leq 1)$.
\end{proof}

\begin{example}[$\|\cdot\|_{\bw,p}$ Regularization]
    \[
        \min_{\substack{\bpi \in \Pi(\bmu,\bnu)\\\|\bpi\|_{\bw,p} \leq \epsilon}} \langle \bc_0, \bpi \rangle 
        = \sup_{\bc \in \Rnn} \OTcost_{\bc}(\bmu, \bnu) - \epsilon \|\bc - \bc_0\|_{1/\bw,q}.
    \]
    In particular when $p=+\infty$ and $\bw = 1$, this coincides with capacity-constrained OT proposed by~\cite{korman2015optimal}.
\end{example}
\begin{proof}
    We apply Corollary~\ref{adversarial:thm:regularization} with $R: \Rnn \to \Rnn$ defined as $R(\bpi) = \iota(\|\bpi\|_{\bw, p} \leq 1)$, for which we need to compute its convex conjugate. We know that the dual of $\iota(\|\cdot\|_p \leq 1)$ is $\|\cdot\|_q$, and using classical results about convex conjugates, $\iota(\|\cdot\|_{\bw, p} \leq 1)^* = \|\cdot\|_{1/\bw, q}$.
\end{proof}

\begin{example}[Tsallis Regularization]\label{adversarial:ex:tsallis}
    For $q \in (0, 1)$, the Tsallis regularized OT problem~\cite{muzellec2017tsallis}
    \[
        \min_{\substack{\bpi \in \Pi(\bmu,\bnu)}} \langle \bc_0, \bpi \rangle - \epsilon \frac{1}{1-q} \sum_{ij} \left( \bpi_{ij}^q - \bpi_{ij} \right)
    \]
    is equivalent to
    \[
        \sup_{\substack{\bc \in \Rnn\\\bc \leq \bc_0}}
        \OTcost_{\bc}(\bmu, \bnu) 
        - \epsilon^{\frac{1}{1-q}} (-p)^{-p}
        \left\| \frac{1}{\bc_0 - \bc} \right\|_{-p}^{-p} 
        + \frac{\epsilon}{1-q}
    \]
    where $p < 0$ is such that $1/p + 1/q = 1$.
\end{example}
We give the details of the computations in appendix.

\subsection{A Link With the Matching Literature in Economics}

Maximizing the OT problem with respect to the ground cost has been proposed in the matching literature in economics as a way to recover a ground cost when only a matching is observed, see \textit{e.g.}~\cite{dupuy2014personality, galichon2015cupid,dupuy2016estimating}. In this subsection, we reinterpret their methods by showing that they are equivalent to some regularized OT problems. In other words, instead of interpreting a regularization problem as a robust OT problem as in subsection~\ref{subsec:examples:examples}, we go the other way around and show that this practical OT maximization problem corresponds to a regularized OT problem.

Practitioners observe two probability measures $\mu,\nu \in \PX$ (\textit{e.g.} features from a group of men and a group of women) and a matching $\pi_0 \in \Pi(\mu,\nu)$ (\textit{e.g.} dating or marriage data). Under the assumption that the matching is optimal for some criteria, we can determine these by finding a ground cost $\opt{c} \in \CXX$ such that the matching $\pi_0$ is an optimal transport plan for the cost $c$. Then $\opt{c}(x,y)$ can be interpreted as the unwillingness for two people with characteristics $x$ and $y$ to be matched.

As shown in Theorem~3 in~\cite{galichon2015cupid},
\begin{align}\label{eqn:maxcost_economists}
    \sup_{c \in \CXX} \OTcost_c(\mu,\nu) - \int c \, d{\pi_0}
    =
    \iota\left(\pi_0 \in \Pi(\mu,\nu)\right)
\end{align}
and if $\pi_0 \in \Pi(\mu,\nu)$, the supremum is attained at any $\opt{c} \in \CXX$ such that $\pi_0$ is an optimal transport plan for the cost $\opt{c}$. Indeed, the first order condition for the maximization problem and the envelope theorem give the result.

In practice, economists are more interested in discovering which features explain the most the observed matching $\pi_0$. To this end, they choose a parametric model for the cost $c$, for example a Mahalanobis model $c \in \left\{ d_\Omega^2: (x,y) \mapsto (x-y)^\top \Omega (x-y) \,|\, \Omega \succeq 0, \|\Omega\| \leq 1 \right\}$. More generally, we can rewrite problem~\eqref{eqn:maxcost_economists} as
\begin{align}\label{eqn:maxcost_economists_regularized}
    \sup_{c \in \CXX} \OTcost_c(\mu,\nu) - \int c \, d{\pi_0} - R^*(c)
\end{align}
where $R \in \FM$ is a lsc convex functional, \textit{e.g.} $R^*(c) = \iota\left(\exists \Omega \succeq 0, \|\Omega\|\leq 1, c = d_\Omega^2\right)$ for the Mahalanobis model.

Using Theorem~\ref{adversarial:thm:Wf_is_adv}, we can then reinterpret problem~\eqref{eqn:maxcost_economists_regularized}:
\begin{align*}
    &\sup_{c \in \CXX} \OTcost_c(\mu,\nu) - \int c \, d{\pi_0} - R^*(c) \nonumber\\
    &=
    \min_{\pi \in \Pi(\mu,\nu)} R(\pi - \pi_0)
\end{align*}
where we have used the fact that $R^{**} = R$. Solving equation~\eqref{eqn:maxcost_economists_regularized} amounts to finding a matching $\pi \in \Pi(\mu,\nu)$ that is close to the observed matching $\pi_0$, as measured by $R$.
\section{Characterization of the Adversarial Cost and Duality}\label{sec:properties}

Theorem~\ref{adversarial:thm:Wf_is_adv} shows that regularizing OT is equivalent to maximizing unregularized OT with respect to the ground cost. This gives access to a robustly computed ground-cost $\opt{c}$. In this section, we first prove a duality theorem for problem~\eqref{eqn:adversarial:Wf} that we use to further characterize $\opt{c}$. We will first need a technical assumption on $F$:
\begin{definition}
    Let $F \in \FM$. We will say that $F$ is \emph{separably $*$-increasing} if for any $\phi, \psi \in \CX$ and any $c \in \CXX$:
    \begin{align}\label{eqn:properties:star_increasing}
        \phi \oplus \psi \leq c \Rightarrow F^*(\phi \oplus \psi) \leq F^*(c).
    \end{align}
    In particular if $F^*$ is increasing, $F$ is separably $*$-increasing.
\end{definition}

This definition, albeit not always verified \textit{e.g.} in the discrete separable case of Proposition~\ref{adversarial:prop:discrete_positivecost} and in the SRW case of Example~\ref{adversarial:example:SRW}, is indeed verified in various cases of interest, \textit{e.g.} for the entropic or $\|\cdot\|_{\bw,p}^p$ regularizations:
\begin{example}
    For $\mu,\nu \in \PX$, $c_0 \in \CXX$ and $\epsilon > 0$, the entropy-regularized OT function
    \[
        F: \pi \mapsto \int c_0 \,d\pi + \epsilon \KL(\pi \| \mu\otimes\nu)
    \]
    is separably $*$-increasing.
\end{example}
\begin{proof}
    As in the proof of example~\ref{adversarial:example:entropic},
    \[
        F^*(c) = \epsilon \int \exp\left(\frac{c-c_0}{\epsilon}\right) - 1 \,d\mu\otimes\nu
    \]
    which verifies condition~\eqref{eqn:properties:star_increasing} as an increasing functional.
\end{proof}
\begin{example}\label{properties:example:norm_p}
    In the discrete setting $\X = \range{n}$, let $\bmu,\bnu \in \PX$, $\bc_0 \in \Rnn$, $\bw \in \Rnn_+$ summing to $n^2$. Take $p>1$ and $\epsilon > 0$. With $\varphi_p(x) = x^p$ if $x \geq 0$ and $\varphi_p(x) = +\infty$ if $x<0$, the $\|\cdot\|_{\bw,p}^p$-regularized OT function
    \[
        F: \bpi \mapsto \langle \bc_0, \bpi \rangle + \epsilon \sum_{ij} \bw_{ij} \varphi_p(\bpi_{ij})
    \]
    is separably $*$-increasing.
\end{example}
\begin{proof}
    Note that minimizing $F$ over $\Pi(\bmu,\bnu) \subset \Rnn_+$ is equivalent to minimizing $\widetilde F: \bpi \mapsto \langle \bc_0, \bpi \rangle + \epsilon \sum_{ij} \bw_{ij} |\bpi_{ij}|^p$.
    One can show that, with $q > 1$ such that $1/p + 1/q = 1$ and $(x)_+ := \max\{0, x\}$:
    \[
        F^*(\bc) = \epsilon \frac{1}{q} \left\| \frac{(\bc - \bc_0)_+}{\epsilon} \right\|_{1/\bw^{q-1}, q}^q
    \]
    which clearly verifies condition~\eqref{eqn:properties:star_increasing}.
\end{proof}

When $F$ is separably $*$-increasing, we can easily prove a duality theorem for problem~\eqref{eqn:adversarial:Wf}:
\begin{theorem}[$\Wf_F$ duality]\label{prop:thm:duality}
    Let $\mu,\nu \in \PX$ and $F \in \FM(\mu,\nu)$ a separably $*$-increasing function. Then:
    \begin{align}\label{eqn:properties:duality}
        \!\!\!\!\!\Wf_F(\mu,\nu) = \max_{\phi, \psi \in \CX} \int \phi d\mu + \int \psi d\nu - F^*(\phi \oplus \psi).
    \end{align}
\end{theorem}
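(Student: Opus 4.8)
The plan is to deduce the identity from Theorem~\ref{adversarial:thm:Wf_is_adv} and classical Kantorovich duality~\eqref{eqn:background:kantorovich_duality}. Theorem~\ref{adversarial:thm:Wf_is_adv} already provides
\[
    \Wf_F(\mu,\nu) = \sup_{c \in \CXX} \OTcost_c(\mu,\nu) - F^*(c),
\]
so the remaining work is to show that, when $F$ is separably $*$-increasing, the supremum over continuous costs $c$ can be restricted to separable ones $c = \phi \oplus \psi$, and then that the resulting supremum over $(\phi,\psi) \in \CX \times \CX$ is actually attained.

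For the inequality ``$\leq$'', I would fix $c \in \CXX$ and use that, $\X$ being compact and $c$ continuous, the Kantorovich dual~\eqref{eqn:background:kantorovich_duality} is attained at some $(\phi_c,\psi_c)$ with $\phi_c \oplus \psi_c \leq c$ and $\OTcost_c(\mu,\nu) = \int \phi_c\,d\mu + \int \psi_c\,d\nu$. Applying the separably $*$-increasing hypothesis to $\phi_c$, $\psi_c$ and $c$ gives $F^*(\phi_c \oplus \psi_c) \leq F^*(c)$, hence
\[
    \OTcost_c(\mu,\nu) - F^*(c) \leq \int \phi_c\,d\mu + \int \psi_c\,d\nu - F^*(\phi_c \oplus \psi_c) \leq \sup_{\phi,\psi \in \CX} \int \phi\,d\mu + \int \psi\,d\nu - F^*(\phi \oplus \psi),
\]
and taking the supremum over $c$ together with Theorem~\ref{adversarial:thm:Wf_is_adv} yields $\Wf_F(\mu,\nu) \leq \sup_{\phi,\psi}\big(\int\phi\,d\mu + \int\psi\,d\nu - F^*(\phi\oplus\psi)\big)$. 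For ``$\geq$'', I would conversely fix $\phi,\psi \in \CX$ and apply Theorem~\ref{adversarial:thm:Wf_is_adv} with the particular choice $c = \phi \oplus \psi$: since $(\phi,\psi)$ is trivially feasible in~\eqref{eqn:background:kantorovich_duality} for this cost, $\OTcost_{\phi\oplus\psi}(\mu,\nu) \geq \int\phi\,d\mu + \int\psi\,d\nu$, so that $\int\phi\,d\mu + \int\psi\,d\nu - F^*(\phi\oplus\psi) \leq \OTcost_{\phi\oplus\psi}(\mu,\nu) - F^*(\phi\oplus\psi) \leq \Wf_F(\mu,\nu)$; a supremum over $(\phi,\psi)$ finishes this direction, so the two quantities coincide.

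It then remains to replace the supremum by a maximum. The map $(\phi,\psi) \mapsto \int\phi\,d\mu + \int\psi\,d\nu - F^*(\phi\oplus\psi)$ is concave and upper semi-continuous (since $F^*$ is lsc), and it is invariant under the shift $(\phi,\psi)\mapsto(\phi+a,\psi-a)$ by a constant $a$, so I would work with normalized potentials (say $\min_{\X}\phi = 0$) and take a maximizing sequence, conveniently realized as the Kantorovich potentials $(\phi_{c_k},\psi_{c_k})$ of a maximizing sequence $c_k$ for the $c$-problem, which by the computation above is again maximizing for the $(\phi,\psi)$-problem. The hard part will be the compactness needed to extract a convergent subsequence: unlike in the fixed-cost Kantorovich problem, the $c_k$ need not be equicontinuous (recall the supremum in Theorem~\ref{adversarial:thm:Wf_is_adv} need not even be attained), so the $c_k$-concavity of the potentials does not by itself bound their modulus of continuity, and one must instead exploit that $F^*$ grows quickly enough that the normalized potentials cannot escape to infinity. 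An alternative I would keep in reserve is to recast the whole identity as a Fenchel--Rockafellar duality between $\min_{\pi\in\Pi(\mu,\nu)}F(\pi)$ and $\sup_{\phi,\psi}\big(\int\phi\,d\mu + \int\psi\,d\nu - F^*(\phi\oplus\psi)\big)$, arranged so that a constraint qualification (finiteness or continuity of $F$ at some coupling in $\Pi(\mu,\nu)$, available since $F\in\FM(\mu,\nu)$) forces attainment on the potential side. Finally, note that if the $c$-supremum of Theorem~\ref{adversarial:thm:Wf_is_adv} happens to be attained --- e.g.\ when $F$ is Fréchet-differentiable at a minimizer $\opt{\pi}$, with $\opt{c} = \nabla F(\opt{\pi})$ by Proposition~\ref{adversarial:prop:c=grad(pi)} --- the Kantorovich potentials of $\opt{c}$ immediately realize the maximum, which handles the cases of primary interest.
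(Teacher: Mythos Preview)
Your argument for the equality of the two suprema is essentially the paper's own proof, just phrased as a pair of inequalities rather than a chain of equalities. The paper plugs Kantorovich duality~\eqref{eqn:background:kantorovich_duality} into the cost-adversarial formulation of Theorem~\ref{adversarial:thm:Wf_is_adv}, swaps $\sup_c$ and $\sup_{\phi,\psi}$, and observes that the separably $*$-increasing hypothesis gives $\inf_{c\,:\,\phi\oplus\psi\leq c}F^*(c)=F^*(\phi\oplus\psi)$; your ``$\leq$'' and ``$\geq$'' paragraphs are exactly the two halves of that swap-and-simplify.

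Where you go further than the paper is in worrying about attainment. The paper simply writes ``$\max$'' on the $(\phi,\psi)$ side after the swap without a separate justification; in other words, its proof in the appendix establishes the identity with a $\sup$ and leaves the upgrade to $\max$ implicit. Your instinct that this needs an argument is sound, and your fallback of casting the problem as a Fenchel--Rockafellar duality (primal $\min_{\pi\in\Pi(\mu,\nu)}F(\pi)$, dual over potentials) with a constraint qualification coming from $F\in\FM(\mu,\nu)$ is the cleanest route to genuine attainment. Your compactness-of-potentials sketch, by contrast, is not yet a proof: as you yourself note, without equicontinuity of the $c_k$ one has no Arzel\`a--Ascoli-type control on the $(\phi_{c_k},\psi_{c_k})$, and ``$F^*$ grows quickly enough'' is not a hypothesis available here. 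So: same idea as the paper for the equality; more scrupulous than the paper about the $\max$, with the Fenchel--Rockafellar alternative being the one to pursue.
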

\begin{proof}
    Using Theorem~\ref{adversarial:thm:Wf_is_adv} and Kantorovich duality~\eqref{eqn:background:kantorovich_duality}:
    \begin{align*}
        \!\!\Wf_F(\mu,\nu) &= \sup_{c \in \CXX} \OTcost_c(\mu,\nu) - F^*(c)\\
        &= \sup_{c \in \CXX} \max_{\substack{\phi,\psi \in \CX\\\phi\oplus\psi \leq c}} \int \phi \,d\mu + \int \psi \,d\nu - F^*(c)\\
        &= \sup_{c \in \CXX} \max_{\phi,\psi \in \CX} \int \phi \,d\mu + \int \psi \,d\nu - F^*(c) \\
        & \phantom{aaaaaaaaaaaaaaaaa} - \iota(\phi\oplus\psi \leq c)\\
        &= \max_{\phi,\psi \in \CX} \int \phi \,d\mu + \int \psi \,d\nu\\
        &\phantom{aaaaaaaaa} + \sup_{c \in \CXX} - F^*(c) - \iota(\phi\oplus\psi \leq c)\\
        &= \max_{\phi, \psi \in \CX} \int \phi \,d\mu + \int \psi \,d\nu - \inf_{\substack{c\in \CXX\\\phi \oplus \psi \leq c}} F^*(c).
    \end{align*}
    Since $F$ is separably $*$-increasing, for any $\phi, \psi \in \CX$,
    \[
        \inf_{\substack{c\in \CXX\\\phi \oplus \psi \leq c}} F^*(c) = F^*(\phi \oplus \psi),
    \]
    which shows the desired duality result.
\end{proof}

Theorem~\ref{prop:thm:duality} subsumes the already known duality results for entropy-regularized OT and quadratically-regularized OT. It also enables us to characterize of the optimal adversarial ground cost when the convex objective $F \in \FM$ is separably $*$-increasing:
\begin{corollary}\label{properties:corollary:c=phi+psi}
    If $\opt{\phi}, \opt{\psi}$ are optimal solutions in~\eqref{eqn:properties:duality}, the cost $\opt{\phi} \oplus \opt{\psi} \in \CXX$ is an optimal adversarial cost in~\eqref{eqn:adversarial:Wf_is_adv}.
\end{corollary}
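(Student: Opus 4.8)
The plan is to chain together the duality theorem \ref{prop:thm:duality} with the cost-adversarial representation of Theorem \ref{adversarial:thm:Wf_is_adv} and Kantorovich duality \eqref{eqn:background:kantorovich_duality}. Start from optimal $\opt\phi,\opt\psi$ in \eqref{eqn:properties:duality}, so that
\[
    \Wf_F(\mu,\nu) = \int \opt\phi \,d\mu + \int \opt\psi \,d\nu - F^*(\opt\phi \oplus \opt\psi).
\]
The goal is to show the particular cost $\opt c := \opt\phi \oplus \opt\psi \in \CXX$ achieves the supremum in \eqref{eqn:adversarial:Wf_is_adv}, i.e. that $\OTcost_{\opt c}(\mu,\nu) - F^*(\opt c) = \Wf_F(\mu,\nu)$.

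First I would establish the ``$\geq$'' direction, which gives optimality once combined with the reverse inequality from the Remark after Theorem \ref{adversarial:thm:Wf_is_adv}. By Kantorovich duality \eqref{eqn:background:kantorovich_duality} applied to the cost $\opt c = \opt\phi \oplus \opt\psi$, the pair $(\opt\phi,\opt\psi)$ is feasible (it satisfies $\opt\phi \oplus \opt\psi \leq \opt c$ with equality), hence
\[
    \OTcost_{\opt c}(\mu,\nu) \geq \int \opt\phi \,d\mu + \int \opt\psi \,d\nu.
\]
Subtracting $F^*(\opt c) = F^*(\opt\phi \oplus \opt\psi)$ from both sides and using the displayed identity for $\Wf_F$ gives
\[
    \OTcost_{\opt c}(\mu,\nu) - F^*(\opt c) \geq \int \opt\phi \,d\mu + \int \opt\psi \,d\nu - F^*(\opt\phi \oplus \opt\psi) = \Wf_F(\mu,\nu).
\]
The reverse inequality $\OTcost_{\opt c}(\mu,\nu) - F^*(\opt c) \leq \sup_{c} \OTcost_c(\mu,\nu) - F^*(c) = \Wf_F(\mu,\nu)$ is immediate from Theorem \ref{adversarial:thm:Wf_is_adv}, so equality holds and $\opt c$ attains the supremum in \eqref{eqn:adversarial:Wf_is_adv}.

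The main subtlety — really the only place anything nontrivial happens — is that the first displayed equality (the value of $\Wf_F$ in terms of $\opt\phi,\opt\psi$) is exactly the content of Theorem \ref{prop:thm:duality}, whose proof is where the separably $*$-increasing hypothesis is consumed; in the corollary itself that hypothesis has already been used upstream, so no further obstacle arises. One should double-check that the maximum in \eqref{eqn:properties:duality} is indeed attained (so that ``optimal $\opt\phi,\opt\psi$'' makes sense), but this is part of the statement of Theorem \ref{prop:thm:duality} (it says $\max$, not $\sup$), so it may be invoked directly. Thus the corollary is essentially a one-line bookkeeping consequence once Theorem \ref{prop:thm:duality} is in hand.
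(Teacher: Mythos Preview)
Your proof is correct and follows essentially the same route as the paper: both chain Theorem~\ref{prop:thm:duality} with Theorem~\ref{adversarial:thm:Wf_is_adv} to sandwich $\OTcost_{\opt\phi\oplus\opt\psi}(\mu,\nu) - F^*(\opt\phi\oplus\opt\psi)$ between two copies of $\Wf_F(\mu,\nu)$. The only cosmetic difference is that the paper observes directly the \emph{equality} $\OTcost_{\phi\oplus\psi}(\mu,\nu) = \int \phi\,d\mu + \int \psi\,d\nu$ (any coupling integrates a separable cost to the same value), whereas you obtain the inequality $\geq$ via Kantorovich dual feasibility; both are equally valid and yield the same conclusion.
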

\begin{proof}
    For $\phi, \psi \in \CX$, note that
    \[
        \OTcost_{\phi\oplus\psi}(\mu, \nu) = \int \phi \,d\mu + \int \psi \,d\nu.
    \]
    Then using $\Wf_F$ duality:
    \begin{align*}
        \Wf_F(\mu, \nu) &= \max_{\phi, \psi \in \CX} \int \phi \,d\mu + \int \psi \,d\nu - F^*(\phi \oplus \psi) \\
        &= \max_{\phi, \psi \in \CX} \OTcost_{\phi\oplus\psi}(\mu, \nu) - F^*(\phi \oplus \psi) \\
        &\leq \sup_{c \in \CXX} \OTcost_c(\mu, \nu) - F^*(c) \\
        &= \Wf_F(\mu, \nu)
    \end{align*}
    where we have used Theorem~\ref{adversarial:thm:Wf_is_adv} in the last line. This shows that the inequality is in fact an equality, so if $\opt{\phi}, \opt{\psi}$ are optimal dual potentials in~\eqref{eqn:properties:duality}, $\opt{\phi} \oplus \opt{\psi}$ is an optimal adversarial cost in~\eqref{eqn:adversarial:Wf_is_adv}.
\end{proof}

Corollary~\ref{properties:corollary:c=phi+psi} is quite striking. Indeed, in the regularized formulation of Corollary~\ref{adversarial:thm:regularization}, any optimal ground cost $\opt{c}$ in equation~\eqref{eqn:adversarial:regularized_is_adv} should be close (in $R^*$ sense) to the prior cost $c_0$ because of the penalization term $\epsilon R^*\left(\frac{c-c_0}{\epsilon}\right)$. But under the assumption that $F$ is separably $*$-increasing, we have just shown that regardless of $c_0$, there exists an optimal adversarial ground cost that is separable.
\section{Adversarial Ground-Cost for Several Measures}\label{sec:multi}

For two measures $\mu,\nu \in \PX$ and a separably $*$-increasing function $F \in \FM(\mu,\nu)$, corollary~\ref{properties:corollary:c=phi+psi} shows that there exists an optimal adversarial ground cost $\opt{c}$ that is separable. This separability, which is verified \textit{e.g.} in the entropic or quadratic case, means that the OT problem for $\opt{c}$ is degenerate in the sense that any transport plan is optimal for the cost $\opt{c}$. From a metric learning point of view, $\opt{c}$ is not a suitable dissimilarity measure on $\X$. But why limit ourselves to two measures? If we observe $N \in \N$ measures $\mu_1, \ldots, \mu_N \in \PX$, we could look for a ground cost $c \in \CXX$ that is adversarial to all the pairs:
\[
    \sup_{c \in \CXX} \sum_{i \neq j} \OTcost_c(\mu_i, \mu_j) - F^*(c)
\]
for some convex regularization $F^*: \CXX \to \Rinf$. We will specifically focus on the case where we observe a sequence of measures $\mu_{1:T} := \mu_1, \ldots, \mu_T \in \PX$, $T \geq 2$. When we observe such time-dependent data, we can look for a sequence of adversarial costs $c_{1:T-1} := c_1, \ldots, c_{T-1} \in \CXX$ which is globally adversarial:
\begin{definition}\label{multi:def:WR}
    For $D: \CXX \times \CXX \to \Rinf$ and $F_t \in \FM(\mu_t, \mu_{t+1})$, $t \in \range{T-1}$, we define:
    \begin{align}\label{eqn:multi:def:multi}
        \WR_{D, F}(\mu_{1:T}) := &\sup_{c_{1:T-1}} \sum_{t=1}^{T-1} \OTcost_{c_t}(\mu_t, \mu_{t+1})\\
        &\qquad \qquad - D(c_t, c_{t+1}) - F_t^*(c_t)\nonumber
    \end{align}
    with the convention $D(c_{T-1}, c_T) = 0.$
\end{definition}
In problem~\eqref{eqn:multi:def:multi}, $D$ acts as a time-regularization by forcing the adversarial sequence of ground-costs to vary ``continuously'' with time.

Taking inspiration from the Subspace Robust Wasserstein (SRW) distance, we propose as a particular case of definition~\ref{multi:def:WR} a generalization of SRW to the case of a sequence of measures $\mu_1, \ldots, \mu_T$, $T \geq 2$:
\begin{definition}\label{multi:def:timeSRW}
    Let $d \in \N$ and $k \in \range{d}$.
    Define $\mathcal{R}_k = \left\{ \Omega \in \Rdd \,|\, 0 \preceq \Omega \preceq I, \trace(\Omega)=k \right\}$.
    We define the \emph{sequential SRW} between $\mu_1, \ldots, \mu_T \in \PRd$ as:
    \begin{align}\label{eqn:multi:def:timeSRW}
        \tSRW_{k,\eta}(\mu_{1:T}) := &\sup_{\Omega_1, \ldots, \Omega_{T-1} \in \mathcal{R}_k} \sum_{t=1}^{T-1} \OTcost_{d_{\Omega_t}^2}(\mu_t, \mu_{t+1}) \\
        &\qquad\qquad\qquad\qquad - \eta \Bures(\Omega_t, \Omega_{t+1})\nonumber
    \end{align}
    where $\Bures(A, B) = \trace(A + B - 2(A^{\frac{1}{2}} B A^{\frac{1}{2}})^{\frac{1}{2}})$ is the squared Bures metric~\citep{bures1969extension,bhatia2018bures} on the SDP cone.
\end{definition}

Note that problem~\eqref{eqn:multi:def:timeSRW} is convex. If $T=2$, the sequential SRW is equal to the usual SRW distance: $\tSRW_{k,\eta}(\mu_1, \mu_2) = \SRW_k(\mu_1, \mu_2)$.
\section{Algorithms}\label{sec:algos}

From now on, we only consider the discrete case $\X = \range{n}$.

\subsection{Projected (Sub)gradient Ascent Solves Nonnegative Adversarial Cost OT}

In the setting of subsection~\ref{subsec:discreteseparable}, we propose to run a projected subgradient ascent on the ground cost $\bc \in \Rnn_+$ to solve problem~\eqref{eqn:adversarial:discreteseparable:linearized}. Note that in this case, $\widehat F(\bpi) := \langle \bc_0, \bpi \rangle + \epsilon \sum_{ij} \widehat R_{ij}^*\left(\frac{\bc_{ij} - {\bc_0}_{ij}}{\epsilon}\right)$ is \textbf{not} separably $*$-increasing, so we can hope that the optimal adversarial ground cost will not be separable.

At each iteration of the ascent, we need to compute a subgradient of $g: \bc \mapsto \OTcost_\bc(\bmu,\bnu) - \epsilon R^*\left(\frac{\bc - {\bc_0}}{\epsilon}\right)$ given by Danskin's theorem:
\begin{align*}
    &\partial g(\bc) =\\
    &\conv \left\{\opt{\bpi} - \nabla R^*\left(\frac{\bc - \bc_0}{\epsilon}\right)
    \,\Bigg|\, \opt{\bpi} \in \argmin_{\bpi \in \Pi(\bmu,\bnu)} \langle \bc, \bpi \rangle \right\}.
\end{align*}

Although projected subgradient ascent does converge, having access to gradients instead of subgradients, hence regularity, helps the convergence. We therefore propose to replace $\OTcost_\bc(\bmu,\bnu)$ by its entropy-regularized version
\[
    \Sinkhorn^\eta_{\bc}(\mu,\nu) = \min_{\bpi \in \Pi(\bmu,\bnu)} \langle \bc, \bpi \rangle + \eta \sum_{ij} \bpi_{ij} ( \log \bpi_{ij} - 1)
\]
in the definition of the obective $g$. Then $g$ is differentiable, because there exists a unique solution $\opt{\bpi}$ in the entropic case (hence $\partial g(\bc)$ is a singleton). This will also speed up the computations of the gradient at each iteration using Sinkhorn's algorithm. We can interpret this addition of a small entropy term in the adversarial cost formulation as a further regularization of the primal:
\begin{corollary}\label{algo:corollary:max_C_entropy}
    Using the same notations as in Theorem~\ref{adversarial:thm:Wf_is_adv}, for $\eta \geq 0$:
    \begin{align*}
        \sup_{\bc \in \Rnn}& \Sinkhorn^\eta_{\bc}(\bmu,\bnu) - F^*(\bc)\\
        &= \min_{\bpi \in \Pi(\bmu,\bnu)} F(\bpi) + \eta \sum_{ij} \bpi_{ij} ( \log \bpi_{ij} - 1).
    \end{align*}
\end{corollary}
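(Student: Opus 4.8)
The plan is to rewrite the left-hand side as a saddle-point value, apply Sion's minimax theorem exactly in the spirit of the proof of Theorem~\ref{adversarial:thm:Wf_is_adv}, and then identify the inner optimum through Fenchel--Moreau. Write $H(\bpi) := \sum_{ij} \bpi_{ij}(\log \bpi_{ij} - 1)$ with the convention $0\log 0 = 0$; on the compact polytope $\Pi(\bmu,\bnu)$ all entries of $\bpi$ lie in $[0,1]$, so there $H$ is finite, convex and continuous. Since $\bpi \mapsto \langle \bc, \bpi\rangle + \eta H(\bpi)$ is continuous on $\Pi(\bmu,\bnu)$, the infimum defining $\Sinkhorn^\eta_\bc(\bmu,\bnu)$ is attained, and therefore, setting $L(\bpi,\bc) := \langle \bc,\bpi\rangle + \eta H(\bpi) - F^*(\bc)$,
\[
    \sup_{\bc\in\Rnn} \Sinkhorn^\eta_\bc(\bmu,\bnu) - F^*(\bc) = \sup_{\bc\in\Rnn} \min_{\bpi\in\Pi(\bmu,\bnu)} L(\bpi,\bc).
\]

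Next I would check the hypotheses of Sion's theorem for $L$ on $\Pi(\bmu,\bnu) \times \Rnn$: the first set is convex and compact, the second is convex; for fixed $\bc$, the map $\bpi \mapsto L(\bpi,\bc)$ is convex and continuous on $\Pi(\bmu,\bnu)$ as the sum of a linear map and $\eta H$; for fixed $\bpi$, the map $\bc \mapsto L(\bpi,\bc)$ is concave and upper semi-continuous as the sum of a linear map and $-F^*$, where $F^*$ is convex and lsc as a conjugate. Sion's theorem then permits swapping, giving
\[
    \sup_{\bc\in\Rnn} \min_{\bpi\in\Pi(\bmu,\bnu)} L(\bpi,\bc) = \min_{\bpi\in\Pi(\bmu,\bnu)} \sup_{\bc\in\Rnn} L(\bpi,\bc).
\]
Finally, for fixed $\bpi$ one has $\sup_{\bc\in\Rnn} L(\bpi,\bc) = \eta H(\bpi) + \sup_{\bc\in\Rnn}\big(\langle \bc,\bpi\rangle - F^*(\bc)\big) = \eta H(\bpi) + F^{**}(\bpi)$, and $F^{**} = F$ by Fenchel--Moreau since $F \in \FM$ is proper, lsc and convex; substituting this in yields exactly the claimed right-hand side. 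The case $\eta = 0$ degenerates to Theorem~\ref{adversarial:thm:Wf_is_adv}, since then $\Sinkhorn^0_\bc = \OTcost_\bc$ on the discrete space, so nothing extra is needed there.

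The only delicate point is the verification of the regularity conditions for Sion's theorem — specifically that the entropy penalty is genuinely continuous (and not merely lower semi-continuous) on the compact set $\Pi(\bmu,\bnu)$, so that $L(\cdot,\bc)$ is well-behaved there, and that $-F^*$ is upper semi-continuous on all of $\Rnn$. These are precisely the conditions already exploited in the proof of Theorem~\ref{adversarial:thm:Wf_is_adv}, so I expect no genuinely new obstacle; the rest is bookkeeping.
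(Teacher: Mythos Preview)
Your proposal is correct and follows essentially the same route as the paper: expand $\Sinkhorn^\eta_\bc$ as a min over $\Pi(\bmu,\bnu)$, apply Sion's minimax theorem to swap the $\sup_\bc$ and $\min_\bpi$, and then identify the inner supremum as $F^{**}(\bpi)=F(\bpi)$ via Fenchel--Moreau. Your version is in fact slightly more explicit than the paper's in verifying the continuity of the entropy term on the compact polytope $\Pi(\bmu,\bnu)$, but the argument is otherwise identical.
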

\begin{proof}
    Let $R(\bpi) := \sum_{ij} \bpi_{ij} ( \log \bpi_{ij} - 1)$. Then:
    \begin{align*}
        \sup_{\bc \in \Rnn} &\Sinkhorn^\eta_{\bc}(\bmu,\bnu) - F^*(\bc) \\
        &= \sup_{\bc \in \Rnn} \min_{\bpi \in \Pi(\bmu, \bnu)} \langle \bpi, \bc \rangle + \eta R(\bpi) - F^*(\bc) \\
        &= \min_{\bpi \in \Pi(\bmu, \bnu)} \eta R(\bpi) + \sup_{\bc \in \Rnn} \langle \bpi, \bc \rangle - F^*(\bc) \\
        &= \min_{\bpi \in \Pi(\bmu, \bnu)} \eta R(\bpi) + F(\bpi)
    \end{align*}
    where we have used Sion's minimax theorem as in the proof of Theorem~\ref{adversarial:thm:Wf_is_adv} to swap the min and the sup, and used as well the fact that $F = F^{**}$ given by Fenchel-Moreau theorem.
\end{proof}

\begin{algorithm}[!t]
	\caption{Projected \textit{(sub)}Gradient Ascent for Nonnegative Adversarial Cost}
	\label{alg:gradienteascent}
	\begin{algorithmic}
   		\STATE {\bfseries Input:} Histograms $\bmu, \bnu \in \Rn$, learning rate $\mathrm{lr}$
		\STATE Initialize $\bc \in \Rnn_+$
   		\FOR{$i=0$ {\bfseries to} MAXITER}
		\STATE $\opt{\bpi} \leftarrow$ $\mathrm{OT}(\bmu, \bnu, \mathrm{cost}=\bc)$
		\STATE $\bc \leftarrow$ $\mathrm{Proj}_{\Rnn_+}\left[ \bc + \mathrm{lr } \opt{\bpi} - \mathrm{lr } \nabla R^*\left(\frac{\bc - \bc_0}{\epsilon}\right) \right]$
  		\ENDFOR
	\end{algorithmic}
\end{algorithm}

\subsection{Sinkhorn-like Algorithm for $*$-increasing $F \in \FM$}\label{subsec:algo:sinkhornlike}
If the function $F\in\FM$ is separably $*$-increasing, we can directly write the optimality conditions for the concave dual problem~\eqref{eqn:properties:duality}:
\begin{align}
    \bmu &= \nabla F^*(\opt{\bphi} \oplus \opt{\bpsi}) \ones \label{eqn:algo:solvephi}\\
    \bnu &= \nabla F^*(\opt{\bphi} \oplus \opt{\bpsi})^\top \ones \label{eqn:algo:solvepsi}
\end{align}
where $\ones$ is the vector of all ones.
We can then alternate between fixing $\bpsi$ and solving for $\bphi$ in~\eqref{eqn:algo:solvephi} and fixing $\bphi$ and solving for $\bpsi$ in~\eqref{eqn:algo:solvepsi}. In the case of entropy-regularized OT, this is equivalent to Sinkhorn's algorithm. In quadratically-regularized OT, this is equivalent to the alternate minimization proposed by~\cite{blondelsmoothandsparse}. We give the detailed derivation of these facts in the appendix.

\subsection{Coordinate Ascent for Sequential SRW}

Problem~\eqref{eqn:multi:def:timeSRW} is a globally convex problem of $\Omega_1, \ldots, \Omega_{T-1}$. We propose to run a randomized coordinate ascent on the concave objective, \textit{i.e.} to select $\tau \in \range{T-1}$ randomly at each iteration and doing a gradient step for $\Omega_\tau$. We need to compute a subgradient of the objective $h: \Omega_\tau \mapsto \sum_{t=1}^{T-1} \OTcost_{d_{\Omega_t}^2}(\mu_t, \mu_{t+1}) - \eta \Bures(\Omega_t, \Omega_{t+1})$, given by:
\begin{align}\label{eqn:algo:grad_Omega_tau}
    \nabla h(\Omega_\tau) = V(\opt{{\bpi_\tau}}) &- \eta \partial_1 \Bures(\Omega_\tau, \Omega_{\tau+1}) \\
    & - \eta \partial_2 \Bures(\Omega_{\tau-1}, \Omega_\tau)\nonumber
\end{align}
where $\bpi \mapsto V(\bpi)$ is defined in Example~\ref{adversarial:example:SRW}, $\opt{{\bpi_\tau}} \in \Rnn$ is any optimal transport plan between $\mu_\tau, \mu_{\tau+1}$ for cost $d^2_{\Omega_\tau}$, and $\partial_1 \Bures, \partial_2 \Bures$ are the gradients of the squared Bures metric with respect to the first and second arguments, computed \textit{e.g.} in~\cite{muzellecembedding}.

\begin{algorithm}[!h]
	\caption{Randomized (Block) Coordinate Ascent for sequential SRW}
	\label{alg:coordinateascent}
	\begin{algorithmic}
  		\STATE {\bfseries Input:} Measures $\mu_1, \ldots, \mu_T \in \PRd$, dimension $k$, learning rate $\mathrm{lr}$
		\STATE Initialize $\Omega_1, \ldots, \Omega_{T-1} \in \Rdd$
  		\FOR{$i=0$ {\bfseries to} MAXITER}
  		\STATE Draw $\tau \in \range{T-1}$
		\STATE $\opt{{\pi_\tau}} \leftarrow$ $\mathrm{OT}(\mu_\tau, \mu_{\tau+1}, \mathrm{cost}=d^2_{\Omega_\tau})$
		\STATE $\Omega_\tau \leftarrow$ $\mathrm{Proj}_{\mathcal{R}_k}\left[ \Omega_\tau + \mathrm{lr} \nabla h(\Omega_\tau) \right]$ using~\eqref{eqn:algo:grad_Omega_tau}
  		\ENDFOR
	\end{algorithmic}
\end{algorithm}
\section{Experiments}\label{sec:experiments}

\subsection{Linearized Entropy-Regularized OT}
We consider the entropy-regularized OT problem in the discrete setting:
\begin{align*}
    \Sinkhorn^{\epsilon}_{\bc_0} (\bmu, \bnu) = \min_{\bpi \in \Pi(\bmu,\bnu)} \langle \bc_0, \bpi \rangle + \epsilon R(\bpi)
\end{align*}
where $\bc_0 \in \Rnn$ and $R: \bpi \mapsto \sum_{ij} \bpi_{ij} ( \log \bpi_{ij} - 1)$. Since $R$ is separable, we can constrain the associated adversarial cost to be nonnegative by linearizing the entropic regularization. By proposition~\ref{adversarial:prop:discrete_positivecost}, this amounts to solve
\begin{align}\label{eqn:expes:linearized_entropic}
    &\sup_{\bc \in \Rnn_+} \OTcost_{\bc}(\bmu, \bnu) - \epsilon \sum_{ij} \exp \left(\frac{\bc_{ij} - {\bc_0}_{ij}}{\epsilon}\right)\\
    &= \min_{\bpi \in \Pi(\bmu,\bnu)} \langle \bc_0, \bpi \rangle + \epsilon \sum_{ij} \widehat R_{ij}(\bpi_{ij}) \nonumber
\end{align}
where $\widehat R_{ij} : \R \to \R$ is defined as
\[
    \widehat R_{ij}(x) := 
    \begin{cases}
        x (\log x - 1) 
            & \text{if } x \geq \exp\left( -\frac{{\bc_0}_{ij}}{\epsilon} \right)\\
        \frac{-{\bc_0}_{ij}}{\epsilon} x - \exp\left( -\frac{{\bc_0}_{ij}}{\epsilon} \right)
            & \text{otherwise.}
    \end{cases}
\]

We first consider $N=100$ couples of measures $(\bmu_i, \bnu_i)$ in dimension $d=1000$, each measure being a uniform measure on $n=100$ samples from a Gaussian distribution with covariance matrix drawn from a Wishart distribution with $k=d$ degrees of freedom. For each couple, we run Algorithm~\ref{alg:gradienteascent} to solve problem~\eqref{eqn:expes:linearized_entropic}. This gives an adversarial cost $\opt{\bc^\epsilon}$. We plot in Figure~\ref{fig:expes:dependence_on_eps} the mean value of $\left| \widehat W_\epsilon - \OTcost_{\|\cdot\|^2}(\bmu_i,\bnu_i) \right|$ depending on $\epsilon$, for $\widehat W_\epsilon$ equal to $\OTcost_{\opt{\bc^\epsilon}}(\bmu_i, \bnu_i)$, $\Sinkhorn
^\epsilon(\bmu_i, \bnu_i)$ and the value of~\eqref{eqn:expes:linearized_entropic}. For small values of $\epsilon$, all three values converge to the real Wasserstein distance. For large $\epsilon$, Sinkhorn stabilizes to the MMD~\cite{genevay2016stochastic} while the robust cost goes to $0$ (for the adversarial cost goes to $0$).
\begin{figure}[!h]
    \centering
    \!\!\!\includegraphics[width=0.49\textwidth]{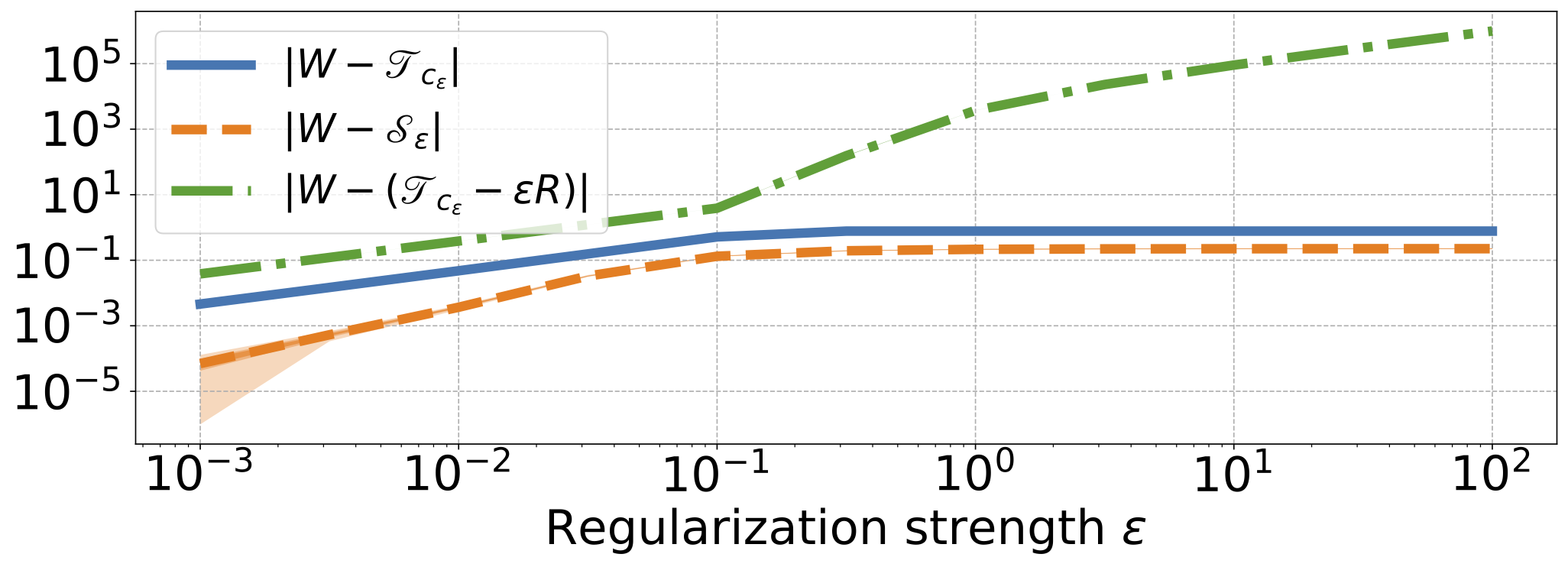}
    \vskip -0.4cm
    \caption{Mean value (over $100$ runs) of the difference between the classical (2-Wasserstein) OT cost $W$ and Sinkhorn $\Sinkhorn^\epsilon$ (orange dashed), OT cost with adversarial nonnegative cost $\OTcost_{c_\epsilon}$ (blue line) and the value of problem~\eqref{eqn:expes:linearized_entropic} $\OTcost_{c_\epsilon} - \epsilon R$ (green dot-dashed) depending on $\epsilon$. The shaded areas represent the min-max, 10\%-90\% and 25\%-75\% percentiles, and appear negligeable except for numerical errors.}
    \label{fig:expes:dependence_on_eps}
\end{figure}

In Figure~\ref{fig:expes:visualize_epsilon_cost}, we visualize the effect of the regularization $\epsilon$ on the ground cost $\opt{\bc^\epsilon}$ itself, for measures $\bmu, \bnu$ plotted in Figure~\ref{fig:expes:original_points}. We use multidimensional scaling on the adversarial cost matrix $\opt{\bc^\epsilon}$ (with distances between points from the same measures unchanged) to recover points in $\R^2$. For large values of $\epsilon$, the adversarial cost goes to $0$, which corresponds in the primal to a fully diffusive transport plan $\bpi = \bmu \bnu^\top$.
\begin{figure}[!h]
	\centering
    \captionsetup[subfigure]{justification=centering}
   	\begin{subfigure}[b]{0.15\textwidth}
	    \centering
        \includegraphics[width=\textwidth]{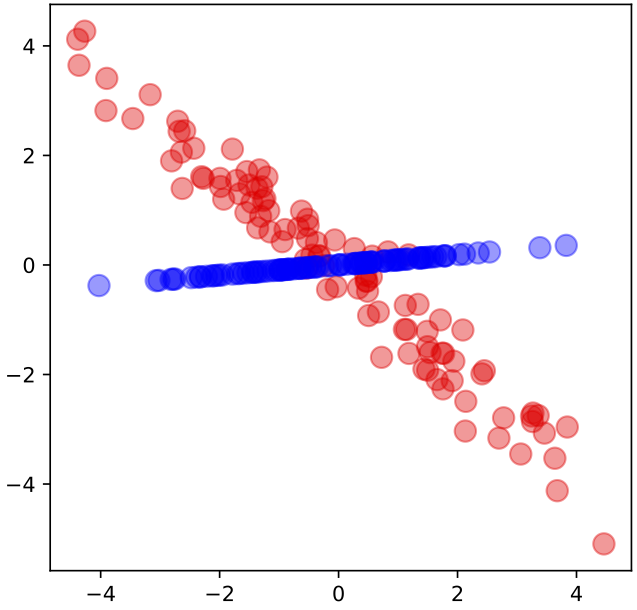}
        \vskip -0.1cm
		\caption{Original Points}\label{fig:expes:original_points}
   	 \end{subfigure}
   	\begin{subfigure}[b]{0.15\textwidth}
	    \centering
        \includegraphics[width=\textwidth]{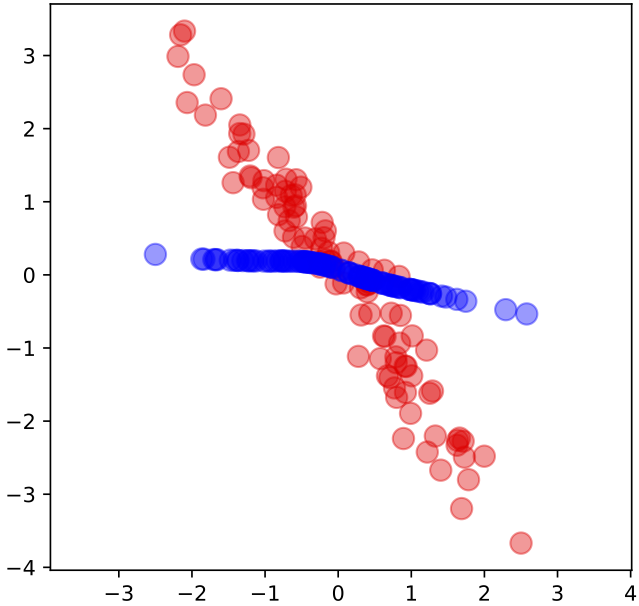}
        \vskip -0.1cm
		\caption{$\epsilon = 0.01$}
   	 \end{subfigure}
   	\begin{subfigure}[b]{0.15\textwidth}
	    \centering
        \includegraphics[width=\textwidth]{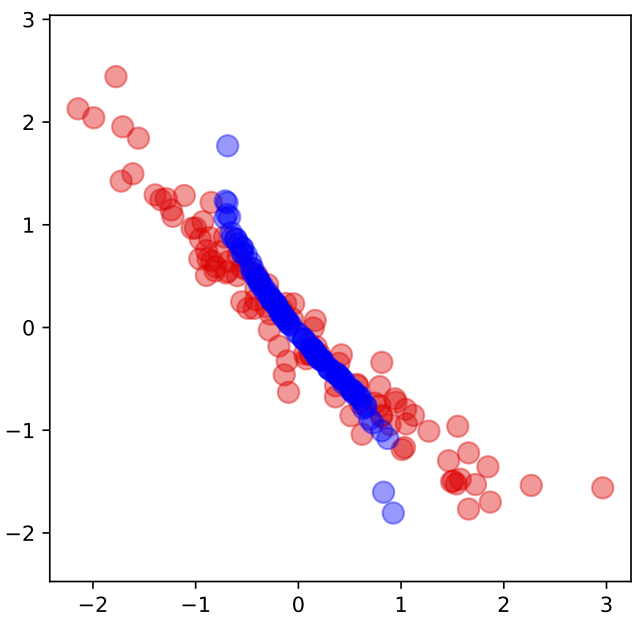}
        \vskip -0.1cm
		\caption{$\epsilon = 2$}
   	 \end{subfigure}
    \vskip -0.2cm
   	\caption{Effect of the regularization strength on the metric: as $\epsilon$ grows, the associated adversarial cost shrinks the distances.}
	\label{fig:expes:visualize_epsilon_cost}
\end{figure}

\subsection{Learning a Metric on the Color Space}

We consider 20 measures $(\bmu_i)_{i=1,\ldots,10}, (\bnu_j)_{j=1,\ldots,10}$ on the red-green-blue color space identified with $\X = [0,1]^3$. Each measure is a point cloud corresponding to the colors used in a painting, divided into two types: ten portraits by Modigliani ($\bmu_i, i \in M$) and ten by Schiele ($\bnu_j, j \in S$). As in SRW and sequential SRW formulations, we learn a metric $c_\Omega \in \CXX$ parameterized by a matrix $0 \preceq \Omega \preceq I$ such that $\trace{\Omega} = 1$ that best separates the Modiglianis and the Schieles:
\[
    \opt{\Omega} \in \argmax_{\Omega \in \mathcal{R}_1} \sum_{i \in M} \sum_{j \in S} \OTcost_{d^2_\Omega}(\bmu_i, \bnu_j).
\]
We compute $\opt{\Omega}$ using projected SGD. We then use this ``one-dimensional'' metric $d^2_{\opt{\Omega}}$ as a ground metric for OT-based color transfer~\cite{rabin2014adaptive}: an optimal transport plan $\bpi$ between two color palettes $\bmu_i, \bnu_j$ gives a way to transfer colors from one painting to the other. Visually, transferring the colors using the classical quadratic cost $\|\cdot\|^2$ or the adversarially-learnt one-dimensional metric $d^2_{\opt{\Omega}}$ makes no major difference, showing that when regularized, OT can extract sufficient information from lower dimensional representations.

\begin{figure}[!h]
	\centering
    \captionsetup[subfigure]{justification=centering}
  	\begin{subfigure}[b]{0.1\textwidth}
	    \centering
        \includegraphics[width=\textwidth]{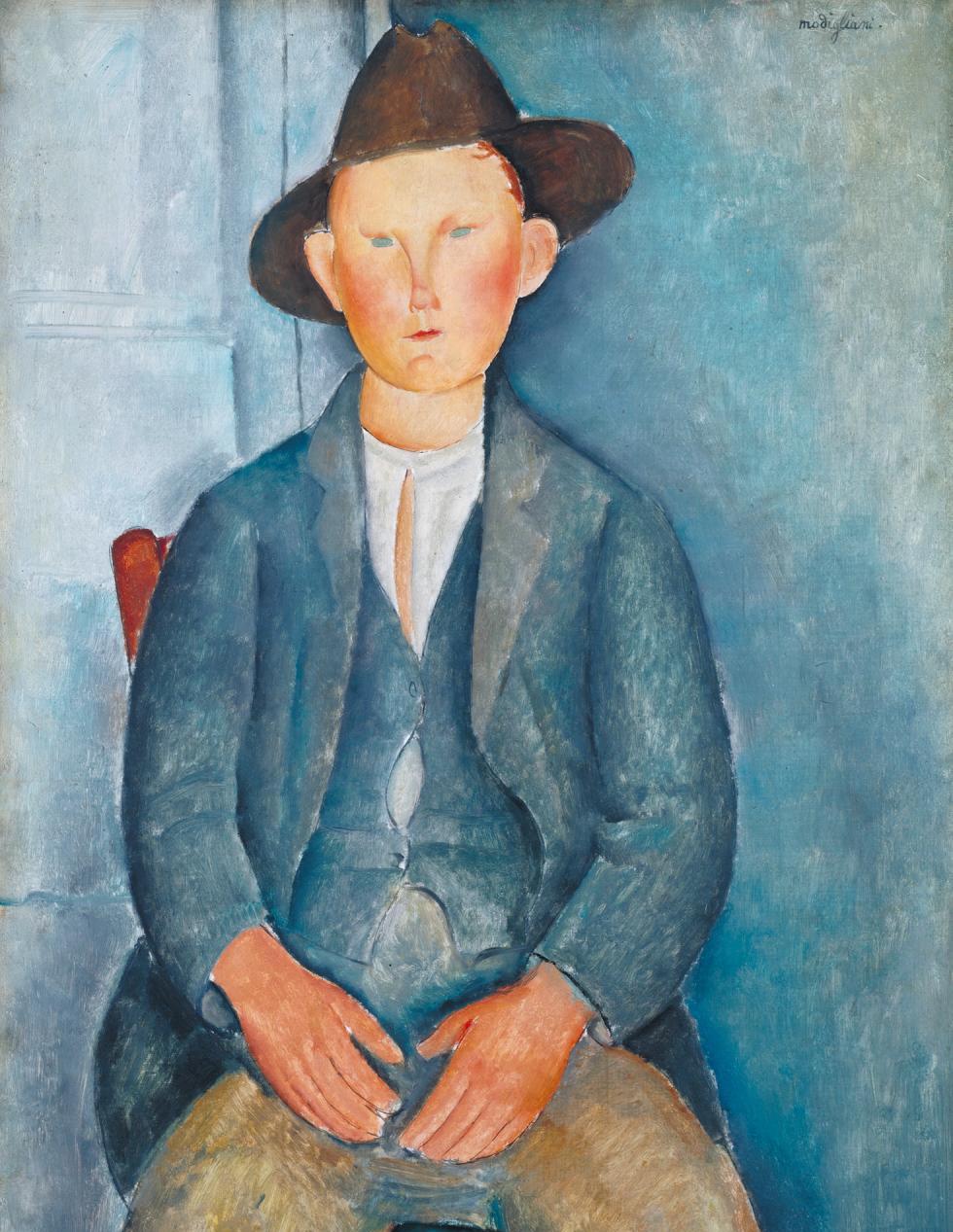}
        \vskip -0.1cm
		\caption{Modigliani}
  	 \end{subfigure}
  	\begin{subfigure}[b]{0.1\textwidth}
	    \centering
        \includegraphics[width=\textwidth]{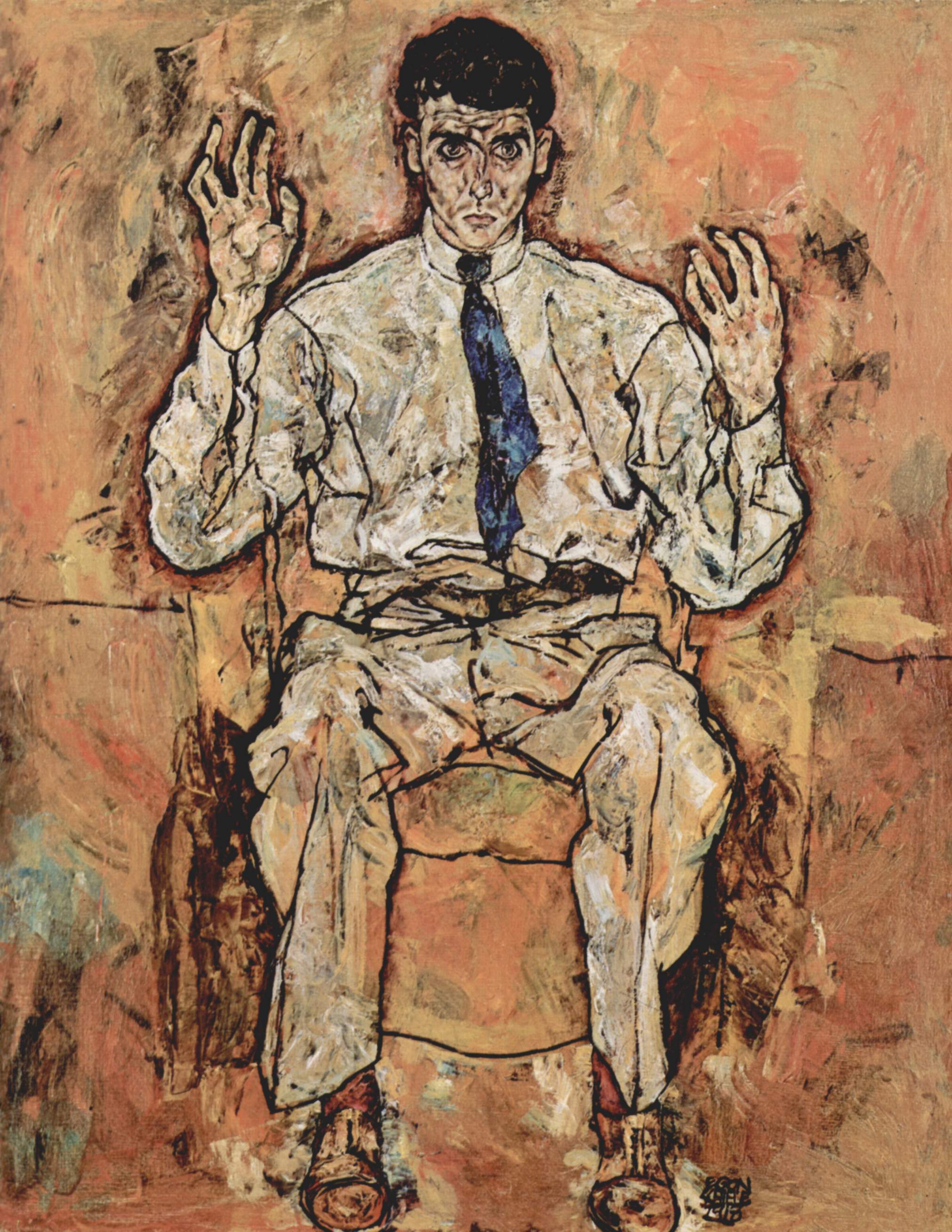}
        \vskip -0.1cm
		\caption{\phantom{aaa} Schiele}
  	 \end{subfigure}
  	\begin{subfigure}[b]{0.1\textwidth}
	    \centering
        \includegraphics[width=\textwidth]{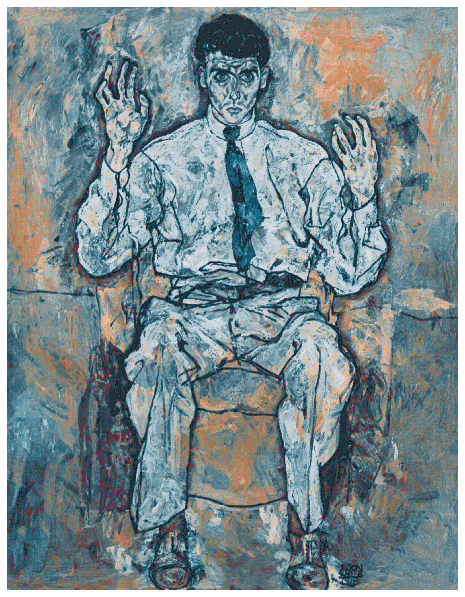}
        \vskip -0.1cm
		\caption{One-dimensional}
  	 \end{subfigure}
  	\begin{subfigure}[b]{0.1\textwidth}
	    \centering
        \includegraphics[width=\textwidth]{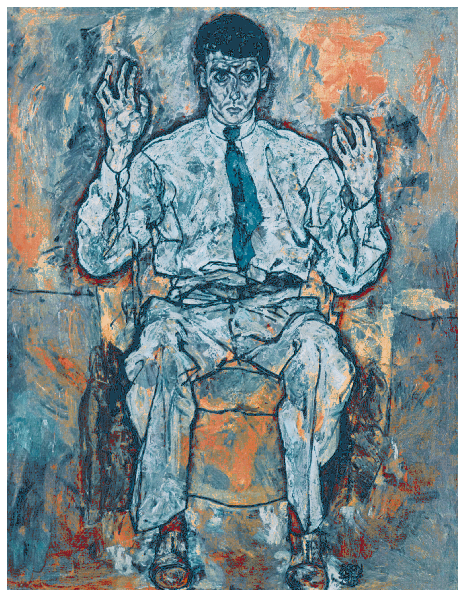}
        \vskip -0.1cm
		\caption{Three-dimensional}
  	 \end{subfigure}
    \vskip -0.3cm
  	\caption{Color transfer, best zoomed in. \textit{(a)} and \textit{(b)}: Original paintings. \textit{(c)}: Schiele's painting with Modigliani's colors, using the learn adversarial one-dimensional metric $d^2_{\opt{\Omega}}$. \textit{(d)}: Schiele's painting with Modigliani's colors, using the Euclidean metric $\|\cdot\|^2$.}
	\label{fig:expes:colortransfer}
\end{figure}
\section{Conclusion}
In this paper, we have shown that any convex regularization of optimal transport can be recast as a ground cost adversarial problem. Under some technical assumption on the regularization, we proved a duality theorem for regularized OT, which we use to characterize the optimal ground cost as a separate function of its two arguments. In order to overcome this degeneration, we proposed to constrain the robust ground-cost to take non-negative values. We also proposed a framework to learn an adversarial sequence of ground costs which is adversarial to a time-varying sequence of measures.
Future work includes learning a continuous adversarial cost $c_\theta$ parameterized by a neural network, under some regularity constraints (\textit{e.g.} $c_\theta$ is Lipschitz). On the application side, learning low-dimensional representations of time-evolving data could be applied in biology as a refinement of the methodology of~\cite{schiebinger2019optimal}.

\paragraph{Acknowledgements} We acknowledge the support of a "Chaire d’excellence de l’IDEX Paris Saclay". We would like to thank Boris Muzellec and Théo Lacombe for fruitful discussions and relevant remarks.

\newpage
\bibliography{biblio}
\bibliographystyle{icml2020}

\clearpage
\appendix

\section{Proofs}

\subsection{Proof for Proposition~\ref{adversarial:prop:c=grad(pi)}}

\begin{proof}
    Let $\opt{\pi}$ be a minimizer of~\eqref{eqn:adversarial:Wf}. Then using the optimality condition for $\sup_{c \in \CXX} \int c\,d\pi - F^*(c)$, any $c$ such that $\opt{\pi} \in \partial F^*(c)$ is a best response to $\opt{\pi}$. But by Fenchel-Young inequality, such $c$ are exactly those in $\partial F(\opt{\pi}) = \{\nabla F(\opt{\pi})\}$. Since $\nabla F(\opt{\pi})$ is the unique best response to $\opt{\pi}$, it is necessarily optimal in~\eqref{eqn:adversarial:Wf_is_adv}. Conversely, if there is a unique maximizer $\opt{c}$, then as a result of the above, $\opt{c} = \nabla F(\opt{\pi})$ for some minimizer $\opt{\pi}$ of the primal. Then $\nabla F^*(\opt{c})$ is optimal in the primal.
\end{proof}

\subsection{Proof for Remark~\ref{adv:rmk:concave}}
\begin{proof}
    As in the proof of Theorem~\ref{adversarial:thm:Wf_is_adv}:
    \begin{align*}
        \inf_{\pi \in \Pi(\mu,\nu)} F(\pi)
        &= \inf_{\pi \in \Pi(\mu,\nu)} -(-F)^{**}(\pi) \\
        &= \inf_{\pi \in \Pi(\mu,\nu)} - \sup_{c \in \CXX} \int c \,d\pi - (-F)^{*}(c) \\
        &= \inf_{\pi \in \Pi(\mu,\nu)} \inf_{c \in \CXX} \int -c \,d\pi + (-F)^{*}(c) \\
        &= \inf_{\pi \in \Pi(\mu,\nu)} \inf_{c \in \CXX} \int c \,d\pi + (-F)^{*}(-c) \\
        &= \inf_{c \in \CXX} \OTcost_c(\mu,\nu) + (-F)^{*}(-c).
    \end{align*}
\end{proof}

\subsection{Proof for Proposition~\ref{adversarial:prop:discrete_positivecost}}
\begin{proof}
    As in the proof of Theorem~\ref{adversarial:thm:Wf_is_adv}, we use Sion's minimax theorem to get
    \[
        \begin{split}
            &\adjustlimits\sup_{\bc \in \Rnn_+} \min_{\bpi \in \Pi(\bmu,\bnu)} \langle \bc, \bpi \rangle - \epsilon \sum_{ij} R_{ij}^* \left(\frac{\bc_{ij} - {\bc_0}_{ij}}{\epsilon}\right)\\
            &= \adjustlimits \min_{\bpi \in \Pi(\bmu,\bnu)} \sup_{\bc \in \Rnn_+} \langle \bc, \bpi \rangle - \epsilon \sum_{ij} R_{ij}^* \left(\frac{\bc_{ij} - {\bc_0}_{ij}}{\epsilon}\right).
        \end{split}
    \]
    Since the optimization in $\bc \in \Rnn_+$ is separable, we only need to consider this optimization coordinate by coordinate, \emph{i.e.} we only need to compute $\sup_{\bc_{ij} \in \R_+} \bpi_{ij} \bc_{ij} - f_{ij}^*(\bc_{ij})$ for all $i,j \in \range{n}$, where $f_{ij}^*(\bc_{ij}) = \epsilon R_{ij}^*\left(\frac{\bc_{ij} - {\bc_0}_{ij}}{\epsilon}\right)$.
    
    Fix $\bpi \in \Pi(\bmu,\bnu)$ and $i,j \in \range{n}$, and define $g_{ij}: \R \ni \bc_{ij} \mapsto \bpi_{ij} \bc_{ij} - f_{ij}^*(\bc_{ij})$.
    
    Suppose that $z_{ij} = f_{ij}'(\bpi_{ij}) \geq 0$. Then
    \[
        f_{ij}(\bpi_{ij}) = f_{ij}^{**}(\bpi_{ij}) = g_{ij}(z_{ij}) = \sup_{\bc_{ij} \in \R} g_{ij}(\bc_{ij}),
    \]
    and since $z_{ij} \geq 0$, $\sup_{\bc_{ij} \in \R_+} g_{ij}(\bc_{ij}) = f_{ij}(\bpi_{ij})$. This means that $\widehat R_{ij}(\bpi_{ij}) = R_{ij}(\bpi_{ij})$.
    
    Suppose now that $z_{ij} = f_{ij}'(\bpi_{ij}) < 0$. This means that
    \[
        \sup_{\bc_{ij} \in \R_+} g_{ij}(\bc_{ij}) < \sup_{\bc_{ij} \in \R} g_{ij}(\bc_{ij}).
    \]
    Since $g_{ij}$ is concave, this shows that $\sup_{\bc_{ij} \in \R_+} g_{ij}(\bc_{ij}) = g_{ij}(0) = -f_{ij}^*(0)$, \emph{i.e.} $\widehat R_{ij}(\bpi_{ij}) = \frac{-{\bc_0}_{ij}}{\epsilon} \bpi_{ij} - R_{ij}^*\left(\frac{-{\bc_0}_{ij}}{\epsilon}\right)$.
    
    Since $R_{ij}$ is convex, $R_{ij}'$ is increasing with pseudo-inverse ${R_{ij}^*}'$. Furthermore, the optimality condition in the convex conjugate problem gives, for any $\alpha \in \R$:
    \[
        R_{ij}^*(\alpha) = \alpha \times {R_{ij}^*}'(\alpha) - R_{ij} \circ {R_{ij}^*}'(\alpha).
    \]
    So if $R_{ij}$ is of class $C^1$, taking $\alpha = \frac{-{\bc_0}_{ij}}{\epsilon}$, as $x$ increases to ${R_{ij}^*}'\left( -\frac{{\bc_0}_{ij}}{\epsilon} \right)$:
    \[
        \widehat R_{ij}(x) \longrightarrow R_{ij} \circ {R_{ij}^*}'\left( -\frac{{\bc_0}_{ij}}{\epsilon} \right) = \widehat R_{ij} \circ {R_{ij}^*}'\left( -\frac{{\bc_0}_{ij}}{\epsilon} \right),
    \]
    meaning that $\widehat R_{ij}$ is of class $C^1$.
\end{proof}

\subsection{Proof for Example~\ref{adversarial:ex:pp-reg}}
\begin{proof}
    We denote by $\sign(x)$ the set $\{+1\}$ if $x>0$, $\{-1\}$ if $x<0$ and $[-1,1]$ if $x=0$.
    We apply Corollary~\ref{adversarial:thm:regularization} with $R: \Rnn \to \R$ defined as $R(\bpi) = \frac{1}{p} \|\bpi\|_{\bw, p}^p$, for which we need to compute its convex conjugate:
    \begin{align*}
        R^*(\bc) &= \sup_{\bpi \in \Rnn} \langle \bpi, \bc \rangle - \frac{1}{p} \sum_{ij} \bw_{ij} |\bpi_{ij}|^p.
    \end{align*}
    Subdifferentiating with respect to $\bpi_{ij}$:
    \begin{align*}
        \bc_{ij} &\in \frac{1}{p} \bw_{ij} \frac{\partial}{\partial \bpi_{ij}} |\bpi_{ij}|^p\\
        &= \bw_{ij} \sign(\bpi_{ij}) |\bpi_{ij}|^{p-1}
    \end{align*}
    This implies that $\sign(\bpi_{ij}) = \sign(\bc_{ij})$, so:
    \[
        \bpi_{ij} = \sign(\bc_{ij}) \left| \frac{\bc_{ij}}{\bw_{ij}} \right|^{q-1}.
    \]
    Finally,
    \begin{align*}
        R^*(\bc) &= \sum_{ij} \bc_{ij} \sign(\bc_{ij}) \left| \frac{\bc_{ij}}{\bw_{ij}} \right|^{q-1} - \frac{1}{p} \bw_{ij} \left| \frac{\bc_{ij}}{\bw_{ij}} \right|^q\\
        &= \frac{1}{q} \sum_{ij} \frac{1}{\bw_{ij}^{q-1}} |\bc_{ij}|^q\\
        &= \frac{1}{q} \|\bc\|_{{1/\bw}^{q-1},q}^q.
    \end{align*}
\end{proof}

\subsection{Proof for Example~\ref{adversarial:ex:tsallis}}
\begin{proof}
    Since $\bpi \in \Pi(\bmu,\bnu)$, $\sum_{ij} \bpi_{ij} = 1$ so we can drop it for now and only consider the term $R(\bpi) = \frac{1}{q-1} \|\bpi\|_{q}^q$ which is separable in the coordinates of $\bpi$:
    \[
        R(\bpi) = \sum_{ij} f(\bpi_{ij})
    \]
    where we have defined the convex function
    \[
        f(x) = \begin{cases}
            \frac{1}{q-1} x^q & \text{if } x \geq 0\\
            +\infty & \text{otherwise.}
        \end{cases}
    \]
    We compute its convex conjugate:
    \begin{align*}
        f^*(y) &= \sup_{x \geq 0} \left\{ xy - \frac{1}{q-1} x^q \right\}\\
        &=
        \begin{cases}
            \left(\frac{y}{p}\right)^p & \text{if } y \leq 0\\
            +\infty & \text{if } y > 0
        \end{cases}
    \end{align*}
    where $p = \frac{q}{q-1} \leq 0$ is such that $1/p + 1/q = 1$. Then $R^*(\bc) = +\infty$ if $\bc$ has a positive entry, and over $\Rnn_-$:
    \begin{align*}
        R^*(\bc) &= \sum_{ij} f^*(\bc_{ij}) = \sum_{ij} \left(\frac{\bc_{ij}}{p}\right)^p \\
        &= \sum_{ij} \left(\frac{-\bc_{ij}}{-p}\right)^p \\
        &= (-p)^{-p} \sum_{ij} \left(\frac{1}{-\bc_{ij}}\right)^{-p}.
    \end{align*}
    Adding the term $\frac{\epsilon}{1-q}$ we left aside to the result of Corollary~\ref{adversarial:thm:regularization}, we find that Tsallis regularized OT is equal to:
    \begin{align*}
        &\sup_{\bc \in \Rnn} \OTcost_{\bc}(\bmu,\bnu) - \epsilon R^*\left(\frac{\bc - \bc_0}{\epsilon}\right) + \frac{\epsilon}{1-q} \\
        &= \sup_{\substack{\bc \in \Rnn\\ \bc \leq \bc_0}} 
                \OTcost_{\bc}(\bmu,\bnu) 
                - \epsilon (-p)^{-p} \sum_{ij} \left[\frac{\epsilon}{{\bc_0}_{ij}-\bc_{ij}}\right]^{-p} \\
            &\qquad\qquad\qquad\qquad + \frac{\epsilon}{1-q} \\
        &=\sup_{\substack{\bc \in \Rnn\\ \bc \leq \bc_0}} 
                \OTcost_{\bc}(\bmu,\bnu)
                - \epsilon^{\frac{1}{1-q}} (-p)^{-p} \sum_{ij} \left[\frac{1}{{\bc_0}_{ij}-\bc_{ij}}\right]^{-p} \\
            &\qquad\qquad\qquad\qquad + \frac{\epsilon}{1-q} \\
        &= \sup_{\substack{\bc \in \Rnn\\ \bc \leq \bc_0}} 
                \OTcost_{\bc}(\bmu,\bnu)
                - \epsilon^{\frac{1}{1-q}} (-p)^{-p}  \left\| \frac{1}{\bc_0-\bc} \right\|_{-p}^{-p} \\
            &\qquad\qquad\qquad\qquad + \frac{\epsilon}{1-q}.
    \end{align*}
    
\end{proof}

\subsection{Proof for Subsection~\ref{subsec:algo:sinkhornlike}}

\paragraph{Entropic OT}
In the case of entropic OT,
\[
    F(\bpi) = \langle \bpi, \bc_0 \rangle + \epsilon \sum_{ij} \bpi_{ij} \left[ \log \bpi_{ij} - 1 \right],
\]
so
\[
    F^*(\bc) = \epsilon \sum_{ij} \exp \left( \frac{\bc_{ij} - {\bc_0}_{ij}}{\epsilon} \right)
\]
and
\[
    \nabla F^*(\bc) = \left[ \exp \left( \frac{\bc_{ij} - {\bc_0}_{ij}}{\epsilon} \right) \right]_{ij}.
\]

Then the system of equations~\eqref{eqn:algo:solvephi}~\eqref{eqn:algo:solvepsi} is:
\begin{align*}
    \forall i,\, \bmu_i &= \sum_j \exp \left( \frac{\opt{\bphi}_i + \opt{\bpsi}_j - {\bc_0}_{ij}}{\epsilon} \right) \\
    &= \exp(\opt{\bphi}_i/\epsilon) \left[K \exp(\opt{\bpsi}/\epsilon) \right]_i\\
    \forall j,\, \bnu_j &= \sum_i \exp \left( \frac{\opt{\bphi}_i + \opt{\bpsi}_j - {\bc_0}_{ij}}{\epsilon} \right) \\
    &= \exp(\opt{\bpsi}_j/\epsilon) \left[ K^\top \exp(\opt{\bphi}/\epsilon) \right]_j
\end{align*}
where $K = \exp(-{\bc_0}/\epsilon) \in \Rnn$ and $\exp$ is taken elementwise. Then solving alternatively for $\bphi$ and $\bpsi$ is exactly Sinkhorn algorithm.

\paragraph{Quadratic OT} In the case of quadratic OT, using the notations and results from example~\ref{properties:example:norm_p}:
\[
    F(\bpi) = \langle \bpi, \bc_0 \rangle + \epsilon \varphi_2(\bpi_{ij}),
\]
and
\[
    F^*(\bc) = \frac{1}{2\epsilon} \sum_{ij} \left[ \left( \bc_{ij} - {\bc_0}_{ij} \right)_+ \right]^2.
\]
Then:
\[
    \nabla F^*(\bc) = \frac{1}{\epsilon} \left( \bc - \bc_0 \right)_+.
\]
The system of equations~\eqref{eqn:algo:solvephi}~\eqref{eqn:algo:solvepsi} is:
\begin{align*}
    \forall i,\, \epsilon \bmu_i &= \sum_j \left( \opt{\bphi}_i + \opt{\bpsi}_j - {\bc_0}_{ij} \right)_+ \\
    \forall j,\, \epsilon \bnu_j &= \sum_i \left( \opt{\bphi}_i + \opt{\bpsi}_j - {\bc_0}_{ij} \right)_+
\end{align*}
which is what~\cite{blondelsmoothandsparse} solve in their appendix B.

\end{document}